\documentclass[pmlr,twocolumn,a4paper,10pt]{jmlr}
% LOADS: amsmath, amssymb, natbib, graphicx, xcolor, url, algorithm2e, hyperref
\citestyle{plain} % comment to get author-year 
\usepackage{isipta2023}
\usepackage[american]{babel} %% Choose your variant of English; be consistent 
\usepackage{tikz} % nice language for creating drawings and diagrams
\usetikzlibrary{patterns}

\newcommand{\pr}[1]{\mathbb{P}\!\left( #1 \right)}
\newcommand{\indep}{\rotatebox[origin=c]{90}{$\models$}}

%% Frontmatter --- Authors: DO NOT EDIT
\jmlrworkshop{ISIPTA 2023}

\title{Markov Conditions and Factorization in Logical Credal Networks}

\author{
  \Name{Fabio Gagliardi Cozman} \\
  \addr Universidade de S\~ao Paulo, Brazil 
}

\begin{document}
\maketitle

\begin{abstract}
We examine the recently proposed language of {\em Logical Credal Networks},
in particular investigating the consequences of various Markov conditions.
We introduce the notion of structure for a Logical Credal Network and show
that a structure without directed cycles leads to a well-known factorization
result. For networks with directed cycles, we analyze the differences
between Markov conditions, factorization results, and specification 
requirements.  
\end{abstract}
\begin{keywords}
Logical credal networks, probabilistic logic, Markov condition, 
factorization.
\end{keywords}

\section{Introduction}\label{sec:intro}
 
This paper examines {\em Logical Credal Networks}, a formalism
recently introduced by \citet{Qian2022}  to combine
logical sentences, probabilities and independence relations. 
The have proposed interesting ideas and evaluated the 
formalism in practical scenarios with positive results.

The central element of a Logical Credal Network (LCN) 
is a collection of   constraints
over probabilities. Independence relations are then extracted mostly 
from the logical content of those inequalities. This scheme 
differs from previous proposals that extract independence relations 
from explicitly
specified graphs~\cite{Andersen96,CozmanUAI2009,Rocha2005}.
Several probabilistic logics have also adopted explicit syntax
for independence relations even when graphs are not employed
\cite{Bacchus90,Doder2017,Halpern2003}.

While Logical Credal Networks have points in common with
  existing formalisms, they do
have   novel features that deserve attention.
For one thing, they resort to directed graphs that may
contain directed cycles. Also they are endowed with a 
sensible Markov condition that is distinct from previous ones.
Little is known about the consequences of these features, 
and how they interact with the syntactic conventions that
turn logical formulas into edges in graphs.
In particular, it seems that no study has focused on
the consequences of Markov conditions on factorization results;
that is, how such conditions affect the factors that constitute 
probability distributions.

In this paper we present a first investigation towards a deeper understanding
of Logical Credal Networks, looking at their specification, their 
Markov conditions, their factorization properties. 
We introduce the notion of ``structure'' for a LCN.
We then show that the local Markov condition proposed by \citet{Qian2022}
collapses to the usual local Markov condition applied to chain graphs 
when the structure has no directed cycles.
We analyze the behavior of the former Markov condition in the presence
of directed cycles, in particular examining factorization properties and
discussing the semantics of the resulting language. 
To conclude, we propose a novel semantics for 
LCNs and examine factorization results for their
associated probability distributions. 

\section{Graphs and Markov Conditions}\label{section:Graphs}

In this section we present the necessary concepts related to
graphs and graph-theoretical probabilistic models (Bayesian
networks, Markov networks, and chain graphs). Definitions and
notation vary across the huge literature on these topics;
we rely here on three sources. 
We use definitions by \citet{Qian2022} and by \citet{Spirtes95}
in their work on LCNs and on directed graphs respectively; we also
use standard results from the textbook by \citet{Cowell99}.  

A {\em graph} is a triple $(\mathcal{V},\mathcal{E}_D,\mathcal{E}_U)$, 
where $\mathcal{V}$ is a set of nodes,
and both $\mathcal{E}_D$ and $\mathcal{E}_U$ are sets of edges.
A node is always labeled with the name of a random variable; in fact, 
we do not distinguish between a node and the corresponding random variable.
The elements of $\mathcal{E}_D$ are {\em directed} edges.
A directed edge is an ordered pair of distinct nodes, and is denoted 
by $A \rightarrow B$.
The elements of $\mathcal{E}_U$ are {\em undirected} edges.
An undirected edge is a pair of distinct nodes, and is denoted by $A \sim B$;
note that nodes are not ordered in an undirected edge, so
there is no difference between $A \sim B$ and $B \sim A$. 
Note that $\mathcal{E}_D$ and $\mathcal{E}_U$ are sets, so there are no multiple copies
of elements in them (for instance, there are no multiple undirected 
edges between two nodes). Note also that there is no loop from a 
node to itself.  
 
If there is a directed edge from $A$ to $B$, 
the edge is said to be {\em from} $A$ to $B$,
and then $A$ is a {\em parent} of $B$ and $B$ is a {\em child} of $A$.
The parents of $A$ are denoted by $\mbox{pa}(A)$.
If there are directed edges $A \rightarrow B$ and $B \rightarrow A$ between
$A$ and $B$, we say there is {\em bi-directed} edge
between $A$ and $B$ and write $A \rightleftarrows B$. 
If $A \sim B$, then both nodes are said to be {\em neighbors}.
The neighbors of $A$ are denoted by $\mbox{ne}(A)$.
The {\em boundary} of a node $A$, denoted by $\mbox{bd}(A)$,
is the set $\mbox{pa}(A) \cup \mbox{ne}(A)$.
The boundary of a set $\mathcal{B}$ of nodes is $\mbox{bd}(\mathcal{B}) = \cup_{A \in \mathcal{B}} \mbox{bd}(A) \backslash \mathcal{B}$.
If we have a set $\mathcal{B}$ of nodes such that, for all $A \in \mathcal{B}$,  the boundary of $A$ contained in $\mathcal{B}$, then $\mathcal{B}$ is an {\em ancestral set}. 

A {\em path} from $A$ to $B$ is a sequence of edges, the first one between $A$ and some node $C_1$, then from $C_1$ to $C_2$ and so on, until an edge from $C_k$ to $B$, where all nodes other than $A$ and $B$
are distinct, and such that for each pair $(D_1,D_2)$ of consecutive 
nodes in the path we have either $D_1 \rightarrow D_2$ or $D_1 \sim D_2$ 
but never $D_2 \rightarrow D_1$.
If $A$ and $B$ are identical, the path is a {\em cycle}.
%Denote by $A \mapsto B$ the existence of a path from $A$ to $B$.
If there is at least one directed edge in a path, the path is a {\em directed path}; if that path is a cycle, then it is a {\em directed cycle}. 
If a path is not directed, then it is {\em undirected} 
(hence all edges in the path are undirected ones). 
A {\em directed}/{\em undirected} graph is a graph that only contains
directed/undirected edges.
A graph without directed cycles is a {\em chain graph}.
Figure~\ref{figure:GraphExamples} depicts a number of graphs.

\begin{figure}[t]
\begin{tikzpicture}[scale=0.9]
\node[draw,rectangle,rounded corners,fill=yellow] (a) at (1,2) {$A$};
\node[draw,rectangle,rounded corners,fill=yellow] (b) at (2,2) {$B$};
\node[draw,rectangle,rounded corners,fill=yellow] (c) at (1,1) {$C$};
\node[draw,rectangle,rounded corners,fill=yellow] (d) at (2,1) {$D$};
\draw[->,>=latex,thick] (a)--(b);
\draw[->,>=latex,thick] (c)--(d);
\draw[->,>=latex,thick] (b) -- (d); 
\node at (1.5,0.3) {\small (a)};
\end{tikzpicture}
\hspace*{1mm}
\begin{tikzpicture}[scale=0.9]
\node[draw,rectangle,rounded corners,fill=yellow] (a) at (1,2) {$A$};
\node[draw,rectangle,rounded corners,fill=yellow] (b) at (2,2) {$B$};
\node[draw,rectangle,rounded corners,fill=yellow] (c) at (1,1) {$C$};
\node[draw,rectangle,rounded corners,fill=yellow] (d) at (2,1) {$D$};
\draw[->,>=latex,thick] (a) to[out=10,in=170] (b);
\draw[->,>=latex,thick] (b) to[out=-170,in=-10] (a);
\draw[->,>=latex,thick] (c) to[out=10,in=170] (d);
\draw[->,>=latex,thick] (d) to[out=-170,in=-10] (c); 
\draw[->,>=latex,thick] (b) to[out=-80,in=80] (d);
\draw[->,>=latex,thick] (d) to[out=100,in=-100] (b);
\node at (1.5,0.3) {\small (b)};
\end{tikzpicture}
\hspace*{1mm}
\begin{tikzpicture}[scale=0.9]
\node[draw,rectangle,rounded corners,fill=yellow] (a) at (1,2) {$A$};
\node[draw,rectangle,rounded corners,fill=yellow] (b) at (2,2) {$B$};
\node[draw,rectangle,rounded corners,fill=yellow] (c) at (1,1) {$C$};
\node[draw,rectangle,rounded corners,fill=yellow] (d) at (2,1) {$D$};
\draw[thick] (a)--(b);
\draw[thick] (c)--(d);
\draw[thick] (b) -- (d);
\node at (1.5,0.3) {\small (c)};
\end{tikzpicture}
\hspace*{1mm}
\begin{tikzpicture}[scale=0.9]
\node[draw,rectangle,rounded corners,fill=yellow] (a) at (1,2) {$A$};
\node[draw,rectangle,rounded corners,fill=yellow] (b) at (2,2) {$B$};
\node[draw,rectangle,rounded corners,fill=yellow] (c) at (1,1) {$C$};
\node[draw,rectangle,rounded corners,fill=yellow] (d) at (2,1) {$D$};
\draw[->,>=latex,thick] (a)--(b);
\draw[->,>=latex,thick] (c)--(d);
\draw[->,>=latex,thick] (b) to[out=-80,in=80] (d);
\draw[->,>=latex,thick] (d) to[out=100,in=-100] (b);
\node at (1.5,0.3) {\small (d)};
\end{tikzpicture}
\hspace*{1mm}
\begin{tikzpicture}[scale=0.9]
\node[draw,rectangle,rounded corners,fill=yellow] (a) at (1,2) {$A$};
\node[draw,rectangle,rounded corners,fill=yellow] (b) at (2,2) {$B$};
\node[draw,rectangle,rounded corners,fill=yellow] (c) at (1,1) {$C$};
\node[draw,rectangle,rounded corners,fill=yellow] (d) at (2,1) {$D$};
\draw[->,>=latex,thick] (a)--(b);
\draw[->,>=latex,thick] (c)--(d);
\draw[thick] (b) -- (d); 
\node at (1.5,0.3) {\small (e)};
\end{tikzpicture}
\vspace*{-5ex}
\caption{Graphs (directed/directed/undirected/directed/ chain).
%All of them have a path from $A$~to~$D$; except for the first
%graph, all of them have a path from $C$ to $B$.
We have $\mbox{pa}(B)=\{A\}$ in Figures \ref{figure:GraphExamples}.a and \ref{figure:GraphExamples}.e, $\mbox{pa}(B)=\{A,D\}$ in Figures \ref{figure:GraphExamples}.b and \ref{figure:GraphExamples}.d, 
and $\mbox{pa}(B)=\emptyset$ in Figure \ref{figure:GraphExamples}.c.}
\label{figure:GraphExamples}
\end{figure}
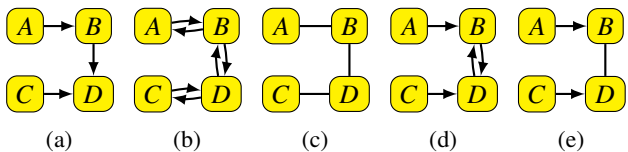

\begin{figure}[t]
\begin{tikzpicture}[scale=0.9]
\node[draw,rectangle,rounded corners,fill=yellow] (a) at (1,2) {$A$};
\node[draw,rectangle,rounded corners,fill=yellow] (b) at (2,2) {$B$};
\node[draw,rectangle,rounded corners,fill=yellow] (c) at (1,1) {$C$};
\node[draw,rectangle,rounded corners,fill=yellow] (d) at (2,1) {$D$};
\draw[thick] (a)--(b);
\draw[thick] (c)--(d);
\draw[thick] (b) -- (d); 
\draw[thick] (c) -- (b);
\node at (1.5,0.3) {\small (a)};
\end{tikzpicture}
\hspace*{1mm}
\begin{tikzpicture}[scale=0.9]
\node[draw,rectangle,rounded corners,fill=yellow] (a) at (1,2) {$A$};
\node[draw,rectangle,rounded corners,fill=yellow] (b) at (2,2) {$B$};
\node[draw,rectangle,rounded corners,fill=yellow] (c) at (1,1) {$C$};
\node[draw,rectangle,rounded corners,fill=yellow] (d) at (2,1) {$D$};
\draw[thick] (a) -- (b); 
\draw[thick] (c) -- (d);
\draw[thick] (b) -- (d);
\draw[thick] (a) -- (d);
\draw[thick] (c) -- (b);
\node at (1.5,0.3) {\small (b)};
\end{tikzpicture}
\hspace*{1mm}
\begin{tikzpicture}[scale=0.9]
\node[draw,rectangle,rounded corners,fill=yellow] (a) at (1,2) {$A$};
\node[draw,rectangle,rounded corners,fill=yellow] (b) at (2,2) {$B$};
\node[draw,rectangle,rounded corners,fill=yellow] (c) at (1,1) {$C$};
\node[draw,rectangle,rounded corners,fill=yellow] (d) at (2,1) {$D$};
\draw[thick] (a)--(b);
\draw[thick] (c)--(d);
\draw[thick] (b) -- (d);
\node at (1.5,0.3) {\small (c)};
\end{tikzpicture}
\hspace*{1mm}
\begin{tikzpicture}[scale=0.9]
\node[draw,rectangle,rounded corners,fill=yellow] (a) at (1,2) {$A$};
\node[draw,rectangle,rounded corners,fill=yellow] (b) at (2,2) {$B$};
\node[draw,rectangle,rounded corners,fill=yellow] (c) at (1,1) {$C$};
\node[draw,rectangle,rounded corners,fill=yellow] (d) at (2,1) {$D$};
\draw[thick] (a) -- (b); 
\draw[thick] (c) -- (d);
\draw[thick] (b) -- (d);
\draw[thick] (a) -- (d);
\draw[thick] (c) -- (b);
\node at (1.5,0.3) {\small (d)};
\end{tikzpicture}
\hspace*{1mm}
\begin{tikzpicture}[scale=0.9]
\node[draw,rectangle,rounded corners,fill=yellow] (a) at (1,2) {$A$};
\node[draw,rectangle,rounded corners,fill=yellow] (b) at (2,2) {$B$};
\node[draw,rectangle,rounded corners,fill=yellow] (c) at (1,1) {$C$};
\node[draw,rectangle,rounded corners,fill=yellow] (d) at (2,1) {$D$};
\draw[thick] (a)--(b);
\draw[thick] (c)--(d);
\draw[thick] (b) -- (d); 
\draw[thick] (a) -- (c);
\node at (1.5,0.3) {\small (e)};
\end{tikzpicture}
\vspace*{-5ex}
\caption{The moral graphs of the graphs in Figure~\ref{figure:GraphExamples}.}
\label{figure:MoralGraphs}
\end{figure}
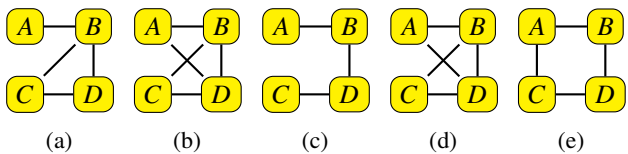

If there is a directed path from $A$ to $B$, then
$A$ is an {\em ancestor} of $B$ and $B$ is a {\em descendant} 
of $A$.
For instance, in Figure \ref{figure:GraphExamples}.a, 
$A$ is the ancestor of $B$ and $D$ is the descendant of $B$;
in Figure \ref{figure:GraphExamples}.c, there are no ancestors
nor descendants of $B$; 
in Figure \ref{figure:GraphExamples}.e, 
$A$ is the ancestor of $B$, and there are no descendants of $B$.
As a digression, note that \citet{Cowell99} define ``ancestor''
and ``descendant'' somewhat differently, by asking that there is
a path from $A$ to $B$ but not from $B$ to $A$; this definition
is equivalent to the previous one for graphs without directed
cycles, but it is different otherwise (for instance, 
in Figure \ref{figure:GraphExamples}.b the node $B$ has
descendants $\{A,C,D\}$ in the previous definition but no
descendants in the sense of \citet{Cowell99}). 
We stick to our former definition, a popular one \cite{Koller2009}
that seems appropriate in the presence of directed cycles \cite{Spirtes95}.

We will need the following  concepts:
\begin{itemize} 
\item Suppose we take graph $\mathcal{G}$ and remove its directed edges to obtain
an auxiliary undirected graph $\mathcal{G}'$. 
A set of nodes $\mathcal{B}$ is a 
{\em chain multi-component} of $\mathcal{G}$ iff 
every pair of nodes in $\mathcal{B}$ is connected by a path
in $\mathcal{G}'$.
And $\mathcal{B}$ is a {\em chain component} iff it is {\em either}
a chain multi-component {\em or}
a single node that does not belong to any chain multi-component.
\item Suppose we take graph $\mathcal{G}$ and add undirected edges
between all pairs of nodes that have a children in a common chain component
of $\mathcal{G}$ and that are not already joined in $\mathcal{G}$.
Suppose we then take the resulting graph and transform every directed edge
into an undirected edge (if $A \leftrightarrows B$, then both
transformed undirected edges collapse into $A \sim B$). 
The final result is the {\em moral graph} of $\mathcal{G}$, denoted
by $\mathcal{G}^m$.
\item Suppose we take a graph $\mathcal{G}$ and a triple $(\mathcal{N}_1,\mathcal{N}_2,\mathcal{N}_3)$
of disjoint subsets of nodes, 
and we build the moral graph of the smallest ancestral set containing the nodes in $\mathcal{N}_1 \cup \mathcal{N}_2 \cup \mathcal{N}_3$.
The resulting graph is denoted by 
$\mathcal{G}^{ma}(\mathcal{N}_1,\mathcal{N}_2,\mathcal{N}_3)$.
\end{itemize}

Figure \ref{figure:MoralGraphs} depicts the moral graphs 
that correspond respectively to the five graphs in 
Figure \ref{figure:GraphExamples}.

There are several formalisms that employ graphs to represent
stochastic independence (and dependence) relations among the
random variables associated with nodes. In this paper we
focus only on discrete random variables, so the concept of
stochastic independence is quite simple:
random variables $X$ and $Y$ are (conditionally) independent 
given random variables $Z$ iff 
$\pr{X=x,Y=y|Z=z} = \pr{X=x|Z=z}\pr{Y=y|Z=z}$ for every possible
$x$ and $y$ and every $z$ such that $\pr{Z=z}>0$. 
In case $Z$ is absent, we have independence of $X$ and $Y$ iff
$\pr{X=x,Y=y}=\pr{X=x}\pr{Y=y}$ for every possible $x$ and $y$. 

A {\em Markov condition} explains how to extract independent relations
from a graph; there are many such conditions in the literature
\cite{Cowell99}.

Consider first an undirected graph $\mathcal{G}$ with set of nodes
$\mathcal{N}$.
The {\em local Markov condition}
states that a node $A$ is independent of all nodes 
in  $\mathcal{N}$ other than $A$ itself and $A$'s neighbors, $\mbox{ne}(A)$,  
given $\mbox{ne}(A)$. 
The {\em global Markov condition}
states that, given any triple $(\mathcal{N}_1,\mathcal{N}_2,\mathcal{N}_3)$
of disjoint subsets of $\mathcal{N}$, such that $\mathcal{N}_2$ separates
$\mathcal{N}_1$ and $\mathcal{N}_3$, then nodes $\mathcal{N}_1$ and
$\mathcal{N}_3$ are independent given nodes $\mathcal{N}_2$.\footnote{In
an undirected graph, a set of nodes separates two other sets iff,
by deleting the separating nodes, we have no connecting path between
a node in one set and a node in the other set.}
If a probability distribution over all random variables in $\mathcal{G}$ 
is everywhere larger than zero, then both conditions are equivalent
and they are equivalent to a {\em factorization property}: 
for each configuration of variables $X=x$, where $X$ denotes the
random variables in $\mathcal{G}$, we have 
$\pr{X=x} = \prod_{c \in \mathcal{C}} \phi_c(x_c)$,
where $\mathcal{C}$ is the set of cliques of $\mathcal{G}$,
each $\phi_c$ is a function over the random variables in clique
$c$, and $x_c$ is the projection of $x$ over the random variables
in clique $c$.\footnote{A clique is a maximal set of nodes such
that each pair of nodes in the set is joined.}

Now consider an acyclic 
directed graph $\mathcal{G}$ with set of nodes $\mathcal{N}$.
The {\em local Markov condition} 
states that a node $A$ is independent, given $A$'s parents $\mbox{pa}(A)$,
of all its non-descendants non-parents except $A$ itself. 
The factorization produced by the local Markov condition is
\begin{equation}\label{equation:BayesNet}
\pr{X=x} = \prod_{N \in \mathcal{N}} \pr{N=x_N|\mbox{pa}(N)=x_{\mbox{pa}(N)}},
\end{equation}
where $x_N$ is the value of $N$ in $x$ and $x_{\mbox{pa}(N)}$ is the projection
of $x$ over the parents of $N$.

Finally, consider a chain graph $\mathcal{G}$ with set of nodes $\mathcal{N}$.
The {\em local Markov condition} for chain graphs is:
\begin{definition}[LMC(C)]\label{definition:LMCchaingraphs}
A node $A$ is independent, given its parents,
of all nodes that are not $A$ itself nor descendants nor boundary nodes of $A$.
\end{definition}
%(Compare this latter condition with Definition \ref{definition:LCNconditionStructures}.)
The {\em global Markov condition} for chain graphs is significantly more complicated:
\begin{definition}[GMC(C)]\label{definition:SpirtesGMC}
Given any triple $(\mathcal{N}_1,\mathcal{N}_2,\mathcal{N}_3)$
of disjoint subsets of $\mathcal{N}$,
if $\mathcal{N}_2$ separates 
$\mathcal{N}_1$ and $\mathcal{N}_2$ in the graph 
$\mathcal{G}^{ma}(\mathcal{N}_1,\mathcal{N}_2,\mathcal{N}_3)$, 
then nodes $\mathcal{N}_1$ 
and $\mathcal{N}_3$ are independent given nodes $\mathcal{N}_2$.
\end{definition}
Again, if a probability distribution over all random variables in $\mathcal{G}$ 
is everywhere larger than zero, then both Markov conditions are equivalent
and they are equivalent to a {\em factorization property} as follows. 
Take the chain components $T_1, \dots, T_n$ ordered so that nodes in $T_i$ can only be at the end of directed edges starting from chain compoents before $T_i$; this is always possible in a chain graph. 
Then the factorization has the form
$\pr{X=x} = \prod_{i=1}^n \pr{\mathcal{N}_i=x_{\mathcal{N}_i}\mid\mbox{bd}(\mathcal{N}_i)=x_{\mbox{\scriptsize bd}(\mathcal{N}_i)} }$
where $\mathcal{N}_i$ is the set of nodes in the $i$th chain component;
$x_{\mathcal{N}_i}$ and $x_{\mbox{\scriptsize bd}(\mathcal{N}_i)}$
are respectively the projection of $x$
over $\mathcal{N}_i$ and 
$\mbox{bd}(\mathcal{N}_i)$.
Moreover, each
factor in the product itself factorizes accordingly to an undirected graph that depends on the corresponding chain 
component~\cite{Cowell99}. 
More precisely, for each chain component $T_i$,
build an undirected graph consisting of the nodes in $\mathcal{N}_i$ and $\mbox{bd}(\mathcal{N}_i)$ with all edges between these nodes in $\mathcal{G}$ turned into undirected 
edges in this new graph, and with new undirected edges connecting each pair of nodes in  $\mbox{bd}(\mathcal{N}_i)$ that were not joined already; then each
$\pr{\mathcal{N}_i\mid\mbox{bd}(\mathcal{N}_i)}$ equals the ratio
$\phi_i(\mathcal{N}_i,\mbox{bd}(\mathcal{N}_i))/\phi_i(\mbox{bd}(\mathcal{N}_i)))$ for positive function  $\phi_i$, where $\phi_i(\mbox{bd}(\mathcal{N}_i))) = \sum \phi_i(\mathcal{N}_i,\mbox{bd}(\mathcal{N}_i))$ with the sum extending over all configurations of $\mathcal{N}_i$.

\section{Logical Credal Networks}

A Logical Credal Network (LCN) consists of a set of propositions $\mathcal{N}$ and two sets of constraints $\mathcal{T}_U$ and $\mathcal{T}_D$.
The set $\mathcal{N}$ is finite with  propositions $A_1, \dots, A_n$.
Each proposition $A_i$ is associated with a random variable $X_i$ that is an indicator variable: if $A_i$ holds in an interpretation of the propositions then $X_i=1$; otherwise, $X_i=0$. 
From now on we simply use the same symbol for a proposition and its corresponding indicator random variable. 
Each constraint in $\mathcal{T}_U$ and in $\mathcal{T}_D$ is of the form
\[
\alpha \leq \pr{\phi|\varphi} \leq \beta,
\]
where each $\phi$ and each $\varphi$ is a formula.
In this paper, formulas are propositional (with propositions in $\mathcal{N}$ and connectives such as negation, disjunction, conjunction).
The definition of LCNs by \citet{Qian2022} allows for relational structures and first-order formulas; however, their semantics is obtained by grounding on finite domains, so we can focus on a propositional language for our purposes here.

Note that $\varphi$ can be a tautology $\top$, in which case we can just write the ``unconditional'' probability $\pr{\phi}$.
One can obviously use simple variants of constraints, such as $\pr{\phi|\varphi} = \beta$ or $\pr{\phi|\varphi} \geq \alpha$ or $\pr{\phi} \leq \alpha$, whenever needed. 

The semantics of a LCN is given by a translation from the LCN to a directed graph where each   proposition/random variable is a node (we often refer to
them as {\em proposition-nodes}).
Each constraint is then processed as follows.
First, a node labeled with formula $\phi$ is added and, in case $\varphi$ is not $\top$, another node labeled with $\varphi$ is added (we often refer  to them as
{\em formula-nodes}), with a directed edge from $\varphi$ to $\phi$. 
Then an edge is added from each proposition in $\varphi$ to node $\varphi$ in case the latter is in the graph, and an edge is added from node $\phi$ to each proposition in $\phi$.\footnote{We note that the original presentation of LCNs is a bit different from what we just described, as there are no edges added for a constraint in $\mathcal{T}_D$ for which $\varphi$ is $\top$. But this does not make any difference in the results and simplifies a bit the discussion.}
Finally, in case the constraint is in $\mathcal{T}_U$, an edge is added from each proposition in $\phi$ to node $\phi$. 
We do not distinguish between two logically equivalent formulas (the original proposal
by \citet{Qian2022} focused only on syntactic operations).

The graph just described is referred to as the 
{\em dependency graph} of the LCN.
Note that, when a formula is just a single proposition, we do not need to 
present it explicitly in the dependency graph; we can just connect
edges from and to the corresponding proposition-node. 
As shown in the next example, in our drawings 
formulas appear inside dashed rectangles.

\begin{example}\label{example:Smokers}
Consider the following LCN, based on the Smokers and Friends example
by \citet{Qian2022}. We have propositions 
$C_i$, $F_i$, $S_i$ for $i\in\{1,2,3\}$.
All constraints belong to $\mathcal{T}_U$ (that is, $\mathcal{T}_D$ is empty),
with $i,j \in \{1,2,3\}$:
\[
\begin{array}{ll}
0.5 \leq \pr{F_i | F_j \wedge F_k} \leq 1, & i \neq j, i \neq k, j \neq k; \\
0 \leq \pr{S_i \vee S_j | F_i} \leq 0.2, & i \neq j; \\
0.03 \leq \pr{C_i|S_i} \leq 0.04; & \\
0 \leq \pr{C_i | \neg S_i} \leq 0.01.
\end{array}
\]
The structure of the LCN is depicted in Figure \ref{figure:SmokersExample}.
Note that there are several directed cycles in this dependency graph.
\end{example}

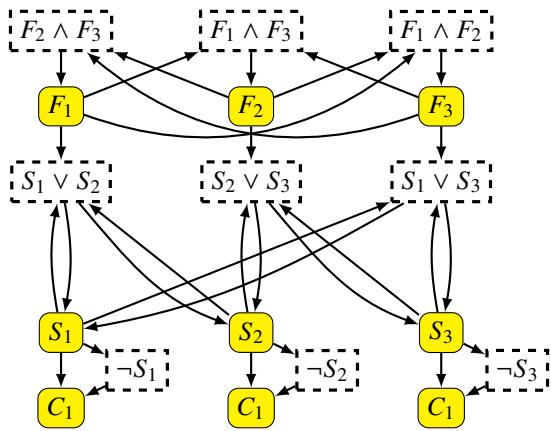
\begin{figure}[t]
\centering
\begin{tikzpicture}
\node[draw,rectangle,very thick,dashed] (f2f3) at (1,5) {$F_2 \wedge F_3$};
\node[draw,rectangle,very thick,dashed] (f1f3) at (3.5,5) {$F_1 \wedge F_3$};
\node[draw,rectangle,very thick,dashed] (f1f2) at (6,5) {$F_1 \wedge F_2$};
\node[draw,rectangle,rounded corners,fill=yellow] (f1) at (1,4) {$F_1$};
\node[draw,rectangle,rounded corners,fill=yellow] (f2) at (3.5,4) {$F_2$};
\node[draw,rectangle,rounded corners,fill=yellow] (f3) at (6,4) {$F_3$};
\node[draw,rectangle,very thick,dashed] (s1s2) at (1,3) {$S_1 \vee S_2$};
\node[draw,rectangle,very thick,dashed] (s2s3) at (3.5,3) {$S_2 \vee S_3$};
\node[draw,rectangle,very thick,dashed] (s1s3) at (6,3) {$S_1 \vee S_3$};
\node[draw,rectangle,rounded corners,fill=yellow] (s1) at (1,1) {$S_1$};
\node[draw,rectangle,rounded corners,fill=yellow] (s2) at (3.5,1) {$S_2$};
\node[draw,rectangle,rounded corners,fill=yellow] (s3) at (6,1) {$S_3$};
\node[draw,rectangle,very thick,dashed] (ns1) at (2,0.5) {$\neg S_1$};
\node[draw,rectangle,very thick,dashed] (ns2) at (4.5,0.5) {$\neg S_2$};
\node[draw,rectangle,very thick,dashed] (ns3) at (7,0.5) {$\neg S_3$};
\node[draw,rectangle,rounded corners,fill=yellow] (c1) at (1,0) {$C_1$};
\node[draw,rectangle,rounded corners,fill=yellow] (c2) at (3.5,0) {$C_1$};
\node[draw,rectangle,rounded corners,fill=yellow] (c3) at (6,0) {$C_1$};
\draw[->,>=latex,thick] (f2f3)--(f1);
\draw[->,>=latex,thick] (f1f3)--(f2);
\draw[->,>=latex,thick] (f1f2)--(f3);
\draw[->,>=latex,thick] (f1)--(f1f3);
\draw[->,>=latex,thick] (f2)--(f2f3);
\draw[->,>=latex,thick] (f2)--(f1f2);
\draw[->,>=latex,thick] (f3)--(f1f3);
\draw[->,>=latex,thick] (f3) to[out=-160,in=-40] (f2f3);
\draw[->,>=latex,thick] (f1) to[out=-20,in=-140] (f1f2);
\draw[->,>=latex,thick] (f1)--(s1s2);
\draw[->,>=latex,thick] (f2)--(s2s3);
\draw[->,>=latex,thick] (f3)--(s1s3);
\draw[->,>=latex,thick] (s1)--(c1); 
\draw[->,>=latex,thick] (s2)--(c2); 
\draw[->,>=latex,thick] (s3)--(c3); 
\draw[->,>=latex,thick] (s1s2) to[out=-80,in=80] (s1);
\draw[->,>=latex,thick] (s1) to[out=100,in=-100] (s1s2);
\draw[->,>=latex,thick] (s2s3) to[out=-80,in=80] (s2);
\draw[->,>=latex,thick] (s2) to[out=100,in=-100] (s2s3);
\draw[->,>=latex,thick] (s1s3) to[out=-80,in=80] (s3);
\draw[->,>=latex,thick] (s3) to[out=100,in=-100] (s1s3);
\draw[->,>=latex,thick] (s1s2) to[out=-50,in=160] (s2);
\draw[->,>=latex,thick] (s2) -- (s1s2);
\draw[->,>=latex,thick] (s2s3) to[out=-50,in=160] (s3);
\draw[->,>=latex,thick] (s3) -- (s2s3);
\draw[->,>=latex,thick] (s1s3) to[out=-150,in=10] (s1);
\draw[->,>=latex,thick] (s1) -- (s1s3);
\draw[->,>=latex,thick] (s1) -- (ns1);
\draw[->,>=latex,thick] (s2) -- (ns2);
\draw[->,>=latex,thick] (s3) -- (ns3);
\draw[->,>=latex,thick] (ns1) -- (c1);
\draw[->,>=latex,thick] (ns2) -- (c2);
\draw[->,>=latex,thick] (ns3) -- (c3);
\end{tikzpicture}
\caption{The dependency graph of the LCN in Example~\ref{example:Smokers}.}
\label{figure:SmokersExample}
\end{figure}

\citet{Qian2022} then define:
\begin{definition}\label{definition:ParentLCN}
The {\em lcn-parents} of a proposition $A$, denoted by $\mbox{\rm lcn-pa}(A)$, 
are the propositions such
that there exists a directed path in the dependency graph from
each of them to $A$ in which all intermediate nodes are formulas.
\end{definition}
\begin{definition}\label{definition:DescendantLCN}
The {\em lcn-descendants} of a proposition $A$, denoted by $\mbox{\rm lcn-de}(A)$,
are the propositions such
that there exists a directed path in the dependency graph from $A$ to each 
of them in which no intermediate node is a parent of~$A$. 
\end{definition}
The connections between these concepts and the definitions
of parent and descendant in Section~\ref{section:Graphs}
will be clear in the next section. 

In any case, using these definitions \citet{Qian2022} proposed 
a Markov condition: 
\begin{definition}[LMC(LCN)]
\label{definition:LCNcondition}
A node $A$ is independent, given its lcn-parents,
of all nodes that are not $A$ itself nor lcn-descendants of $A$ 
nor lcn-parents of $A$. 
\end{definition}

The Markov condition in Definition \ref{definition:LCNcondition} is:
\begin{equation}
\label{equation:LCNcondition}
X \; \indep \; \mathcal{N} \backslash \{ \{A\} \cup \mbox{lcn-de}(A) \cup \mbox{lcn-pa}(A)\} \mid \mbox{lcn-pa}(A),
\end{equation}
where we use $\indep$ here, and in the remainder of the paper, to 
mean ``is independent of''. 

We will often use the superscript $c$ to mean complement, hence
$ \mathcal{A}^c \doteq \mathcal{N} \backslash \mathcal{A}$. 

\citet{Qian2022} have derived inference algorithms (that is, they consider
the computation of conditional probabilities) that exploit such
independence relations, and they examine applications that demonstrate
the practical value of LCNs. 

It seems that a bit more discussion about the meaning of this Markov condition,
as well as its properties and consequences, would be welcome.
To do so, we find it useful to introduce a
novel concept, namely, the {\em structure} of a LCN.

\begin{figure*}
\centering
\begin{tikzpicture}
\node[draw,rectangle,very thick,dashed] (f2f3) at (1,6) {$F_2 \wedge F_3$};
\node[draw,rectangle,very thick,dashed] (f1f3) at (3.5,6) {$F_1 \wedge F_3$};
\node[draw,rectangle,very thick,dashed] (f1f2) at (6,6) {$F_1 \wedge F_2$};
\node[draw,rectangle,rounded corners,fill=yellow] (f1) at (1,4.5) {$F_1$};
\node[draw,rectangle,rounded corners,fill=yellow] (f2) at (3.5,4.5) {$F_2$};
\node[draw,rectangle,rounded corners,fill=yellow] (f3) at (6,4.5) {$F_3$};
\node[draw,rectangle,very thick,dashed] (s1s2) at (1,3) {$S_1 \vee S_2$};
\node[draw,rectangle,very thick,dashed] (s2s3) at (3.5,3) {$S_2 \vee S_3$};
\node[draw,rectangle,very thick,dashed] (s1s3) at (6,3) {$S_1 \vee S_3$};
\node[draw,rectangle,rounded corners,fill=yellow] (s1) at (1,1) {$S_1$};
\node[draw,rectangle,rounded corners,fill=yellow] (s2) at (3.5,1) {$S_2$};
\node[draw,rectangle,rounded corners,fill=yellow] (s3) at (6,1) {$S_3$};
\node[draw,rectangle,very thick,dashed] (ns1) at (2,0.25) {$\neg S_1$};
\node[draw,rectangle,very thick,dashed] (ns2) at (4.5,0.25) {$\neg S_2$};
\node[draw,rectangle,very thick,dashed] (ns3) at (7,0.25) {$\neg S_3$};
\node[draw,rectangle,rounded corners,fill=yellow] (c1) at (1,-0.5) {$C_1$};
\node[draw,rectangle,rounded corners,fill=yellow] (c2) at (3.5,-0.5) {$C_1$};
\node[draw,rectangle,rounded corners,fill=yellow] (c3) at (6,-0.5) {$C_1$};
\draw[->,>=latex,thick,dotted] (f2f3)--(f1);
\draw[->,>=latex,thick,dotted] (f1f3)--(f2);
\draw[->,>=latex,thick,dotted] (f1f2)--(f3);
\draw[->,>=latex,thick,dotted] (f1)--(f1f3);
\draw[->,>=latex,thick,dotted] (f2)--(f2f3);
\draw[->,>=latex,thick,dotted] (f2)--(f1f2);
\draw[->,>=latex,thick,dotted] (f3)--(f1f3);
\draw[->,>=latex,thick,dotted] (f3) -- (f2f3); %to[out=-160,in=-40] (f2f3);
\draw[->,>=latex,thick,dotted] (f1) -- (f1f2); %to[out=-20,in=-140] (f1f2);
\draw[->,>=latex,thick,dotted] (f1)--(s1s2);
\draw[->,>=latex,thick,dotted] (f2)--(s2s3);
\draw[->,>=latex,thick,dotted] (f3)--(s1s3);
\draw[->,>=latex,thick] (s1)--(c1); 
\draw[->,>=latex,thick] (s2)--(c2); 
\draw[->,>=latex,thick] (s3)--(c3); 
\draw[->,>=latex,thick,dotted] (s1s2) to[out=-80,in=80] (s1);
\draw[->,>=latex,thick,dotted] (s1) to[out=100,in=-100] (s1s2);
\draw[->,>=latex,thick,dotted] (s2s3) to[out=-80,in=80] (s2);
\draw[->,>=latex,thick,dotted] (s2) to[out=100,in=-100] (s2s3);
\draw[->,>=latex,thick,dotted] (s1s3) to[out=-80,in=80] (s3);
\draw[->,>=latex,thick,dotted] (s3) to[out=100,in=-100] (s1s3);
\draw[->,>=latex,thick,dotted] (s1s2) to[out=-50,in=160] (s2);
\draw[->,>=latex,thick,dotted] (s2) -- (s1s2);
\draw[->,>=latex,thick,dotted] (s2s3) to[out=-50,in=160] (s3);
\draw[->,>=latex,thick,dotted] (s3) -- (s2s3);
\draw[->,>=latex,thick,dotted] (s1s3) to[out=-150,in=10] (s1);
\draw[->,>=latex,thick,dotted] (s1) -- (s1s3);
\draw[thick,blue] (f1)--(f2);
\draw[thick,blue] (f2)--(f3);
\draw[thick,blue] (f1) to[out=-15,in=-165] (f3);
\draw[thick,blue] (s1)--(s2);
\draw[thick,blue] (s2)--(s3);
\draw[thick,blue] (s1) to[out=-15,in=-165] (s3);
\draw[->,>=latex,thick,blue] (f1) to[out=-140,in=120] (s1);
\draw[->,>=latex,thick,blue] (f1)--(s2);
\draw[->,>=latex,thick,blue] (f2) to[out=-140,in=120] (s2);
\draw[->,>=latex,thick,blue] (f2)--(s3);
\draw[->,>=latex,thick,blue] (f3) to[out=-40,in=60] (s3);
\draw[->,>=latex,thick,blue] (f3) to[out=-120,in=25] (s1);
\draw[->,>=latex,thick,dotted] (s1) -- (ns1);
\draw[->,>=latex,thick,dotted] (s2) -- (ns2);
\draw[->,>=latex,thick,dotted] (s3) -- (ns3);
\draw[->,>=latex,thick,dotted] (ns1) -- (c1);
\draw[->,>=latex,thick,dotted] (ns2) -- (c2);
\draw[->,>=latex,thick,dotted] (ns3) -- (c3);
\node at (0,2) {(a)};
\end{tikzpicture}
\hspace*{15mm}
\begin{tikzpicture}
\node[draw,rectangle,rounded corners,fill=yellow] (f1) at (1,2.5) {$F_1$};
\node[draw,rectangle,rounded corners,fill=yellow] (f2) at (3.5,2.5) {$F_2$};
\node[draw,rectangle,rounded corners,fill=yellow] (f3) at (6,2.5) {$F_3$};
\node[draw,rectangle,rounded corners,fill=yellow] (s1) at (1,1) {$S_1$};
\node[draw,rectangle,rounded corners,fill=yellow] (s2) at (3.5,1) {$S_2$};
\node[draw,rectangle,rounded corners,fill=yellow] (s3) at (6,1) {$S_3$};
\node[draw,rectangle,rounded corners,fill=yellow] (c1) at (1,0) {$C_1$};
\node[draw,rectangle,rounded corners,fill=yellow] (c2) at (3.5,0) {$C_1$};
\node[draw,rectangle,rounded corners,fill=yellow] (c3) at (6,0) {$C_1$};
\draw[->,>=latex,thick] (s1)--(c1); 
\draw[->,>=latex,thick] (s2)--(c2); 
\draw[->,>=latex,thick] (s3)--(c3); 
\draw[thick,blue] (f1)--(f2);
\draw[thick,blue] (f2)--(f3);
\draw[thick,blue] (f1) to[out=-15,in=-165] (f3);
\draw[thick,blue] (s1)--(s2);
\draw[thick,blue] (s2)--(s3);
\draw[thick,blue] (s1) to[out=-15,in=-165] (s3);
\draw[->,>=latex,thick,blue] (f1) -- (s1);
\draw[->,>=latex,thick,blue] (f1)--(s2);
\draw[->,>=latex,thick,blue] (f2) -- (s2);
\draw[->,>=latex,thick,blue] (f2)--(s3);
\draw[->,>=latex,thick,blue] (f3) -- (s3);
\draw[->,>=latex,thick,blue] (f3) -- (s1);
\node at (0,0) {(b)};
%%%%%
%%%%%
\node[draw,rectangle,rounded corners,fill=yellow] (hf123) at (1.5,-2) {$F_{1,2,3}$};
\node[draw,rectangle,rounded corners,fill=yellow] (hs123) at (1.5,-3) {$S_{1,2,3}$};
\node[draw,rectangle,rounded corners,fill=yellow] (hc1) at (0.5,-4) {$C_1$};
\node[draw,rectangle,rounded corners,fill=yellow] (hc2) at (1.5,-4) {$C_2$};
\node[draw,rectangle,rounded corners,fill=yellow] (hc3) at (2.5,-4) {$C_3$};
\draw[->,>=latex,thick] (hf123)--(hs123);
\draw[->,>=latex,thick] (hs123)--(hc1);
\draw[->,>=latex,thick] (hs123)--(hc2);
\draw[->,>=latex,thick] (hs123)--(hc3);
\node at (0,-3) {(c)};
%%%%%
%%%%%
\node[draw,rectangle,rounded corners,fill=yellow] (hf12) at (4.2,-2) {$F_{1,2}$};
\node[draw,rectangle,rounded corners,fill=yellow] (hf23) at (5.8,-2) {$F_{2,3}$};
\node[draw,rectangle,rounded corners,fill=yellow] (hs123) at (5,-3) {$S_{1,2,3}$};
\node[draw,rectangle,rounded corners,fill=yellow] (hc1) at (4,-4) {$C_1$};
\node[draw,rectangle,rounded corners,fill=yellow] (hc2) at (5,-4) {$C_2$};
\node[draw,rectangle,rounded corners,fill=yellow] (hc3) at (6,-4) {$C_3$};
\draw[thick] (hf12)--(hf23);
\draw[->,>=latex,thick] (hf12)--(hs123);
\draw[->,>=latex,thick] (hf23)--(hs123);
\draw[->,>=latex,thick] (hs123)--(hc1);
\draw[->,>=latex,thick] (hs123)--(hc2);
\draw[->,>=latex,thick] (hs123)--(hc3);
\node at (3.5,-3) {(d)};
\end{tikzpicture}
\caption{(a) The dependency graph for the LCN in Example~\ref{example:Smokers},
together with the edges in the structure of the LCN. Edges in the structure
are solid (the ones added in the process are in blue); edges
in and out of formula-nodes are dotted.
(b) The structure of the LCN, by removing the formula-nodes and
associated edges.
(c) A directed acyclic graph with the chain components of
the chain graph that represents the structure.
(d) A variant discussed in Example \ref{example:Factorization}.}
\label{figure:DependencyStructure}
\end{figure*}
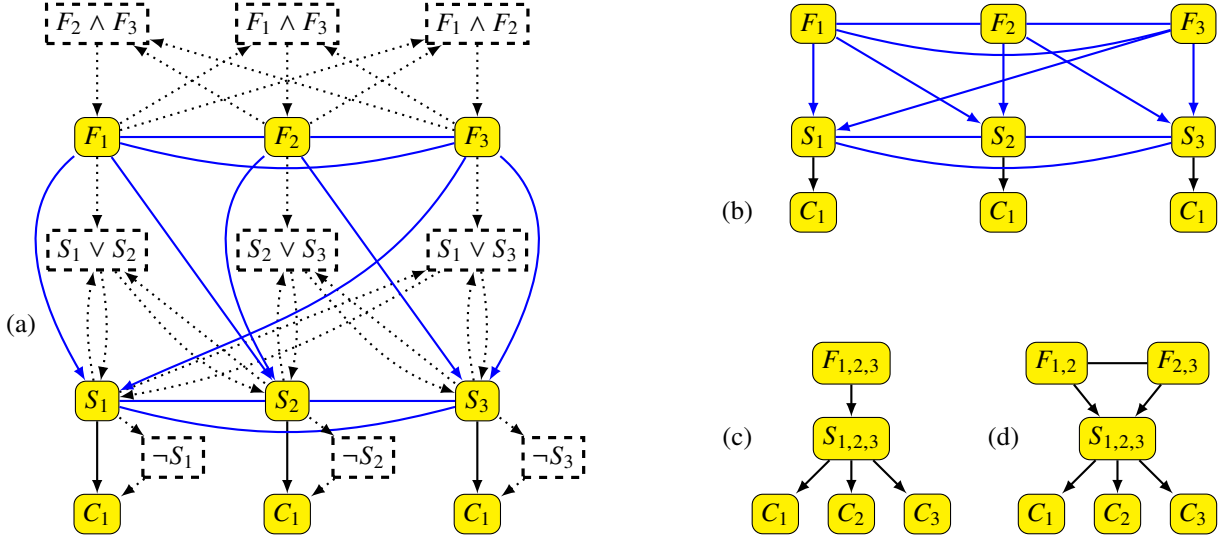

\section{The Structure of a LCN}

The dependency graph of a LCN is rather similar in spirit to
the {\em factor graph} of a Bayesian network \cite{Koller2009},
where both random variables and conditional probabilities are
explicitly represented. This is a convenient device when it comes
to message-passing inference algorithms, but perhaps it contains
too much information when one wishes to examine independence relations.

We introduce another graph to be extracted from the dependency
graph of a given LCN, that
we call the {\em structure} of the LCN, as follows:
\begin{enumerate}
%\item Start with the dependency graph. 
\item For each formula-node
$\phi$ that appears as a 
conditioned formula in a constraint in $\mathcal{T}_U$,
place an undirected edge between any two propositions
that appear in $\phi$.
\item For each pair of formula-nodes $\varphi$ and $\phi$
that appear in a constraint, add a directed edge from
each proposition in $\varphi$ to each proposition in $\phi$.
\item If, for some pair of proposition-nodes $A$ and $B$, there is
now a pair of edges $A \leftrightarrows B$, then replace both edges
by an undirected edge. 
\item Finally, remove multiple edges
and remove the formula-nodes and all edges in and out of them.
\end{enumerate} 

\begin{example}
Figures \ref{figure:DependencyStructure}.a and 
\ref{figure:DependencyStructure}.b depict  the
  structure of the LCN in Example \ref{example:Smokers}.
\end{example}

We have:
\begin{lemma}\label{theorem:ParentBoundary}
The set of lcn-parents of a proposition $A$ in a LCN is identical 
to the boundary of $A$ with respect to the structure of the LCN.
\end{lemma}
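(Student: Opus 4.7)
The plan is to establish both inclusions $\mbox{lcn-pa}(A) \subseteq \mbox{bd}(A)$ and $\mbox{bd}(A) \subseteq \mbox{lcn-pa}(A)$ by unpacking the definitions and tracing the correspondence between directed paths in the dependency graph and the edges introduced at each step of the structure construction.

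For the forward inclusion, suppose $B \in \mbox{lcn-pa}(A)$, witnessed by a directed path $B \to F_1 \to \cdots \to F_k \to A$ in the dependency graph in which every $F_i$ is a formula-node. The last edge $F_k \to A$ arises only from $F_k$ being the consequent $\phi$ of some constraint with $A$ occurring as a proposition in $\phi$. The first edge $B \to F_1$ arises only because $B$ occurs in $F_1$, either because $F_1$ is the antecedent of some constraint, or because $F_1$ is the consequent of a $\mathcal{T}_U$ constraint. I would then split into two clean cases. If $k=1$ and the single formula-node is the consequent of a $\mathcal{T}_U$ constraint (with both $A$ and $B$ appearing in it), Step~1 of the structure construction places the undirected edge $B \sim A$, giving $B \in \mbox{ne}(A) \subseteq \mbox{bd}(A)$. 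Otherwise there is a constraint with antecedent $F_1$ and consequent $F_k$ (possibly the same formula-node, but involving distinct roles across constraints along the chain), and Step~2---possibly followed by Step~3 collapsing a $\leftrightarrows$ pair to $\sim$---introduces an edge from $B$ to $A$, so $B \in \mbox{pa}(A) \cup \mbox{ne}(A) = \mbox{bd}(A)$.

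For the reverse inclusion, suppose $B \in \mbox{bd}(A)$, so the structure contains either $B \to A$ or $B \sim A$. I would trace which construction step introduced this edge. If it came from Step~1, then $B$ and $A$ both appear in the consequent $\phi$ of some $\mathcal{T}_U$ constraint, and the path $B \to \phi \to A$ in the dependency graph (both edges existing by the $\mathcal{T}_U$ convention) witnesses $B \in \mbox{lcn-pa}(A)$. If the edge came from Step~2, then there is a constraint with antecedent $\varphi$ containing $B$ and consequent $\phi$ containing $A$, and the path $B \to \varphi \to \phi \to A$ passes through formula-nodes only, again witnessing $B \in \mbox{lcn-pa}(A)$. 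If Step~3 was responsible, then both Step-2 witnesses apply in opposite directions and either suffices.

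The subtlety I expect to be hardest is reconciling the per-constraint nature of Step~2 with the fact that the lcn-parent definition admits directed paths through arbitrarily long chains of formula-nodes (when a formula serves as consequent in one constraint and antecedent in another). The key observation to handle this is that the length-2 and length-3 witness patterns above already cover the cases Step~1 and Step~2 can produce, and conversely, any longer dependency-graph chain used in a witness path must contain a sub-hop of the canonical form corresponding to a single constraint, so that the structure's Steps~1--3 already capture the required endpoint relation.
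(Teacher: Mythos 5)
Your overall strategy coincides with the paper's: both directions are handled by matching the possible witness paths in the dependency graph against the steps that create edges in the structure, and your reverse inclusion, together with your treatment of the length-one $\mathcal{T}_U$ case and the length-two antecedent/consequent case, reproduces exactly the paper's enumeration of the patterns $B \rightarrow A$, $B \rightarrow \phi \rightarrow A$, $B \rightarrow \varphi \rightarrow \phi \rightarrow A$ and their bi-directed variants.

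The problem lies precisely in the subtlety you flag at the end, and your resolution of it is a non sequitur. A ``sub-hop of the canonical form'' inside a longer chain $B \rightarrow F_1 \rightarrow \cdots \rightarrow F_k \rightarrow A$ produces structure edges between $B$ and the propositions of $F_2$, or between the propositions of $F_{k-1}$ and $A$, but never between the endpoints $B$ and $A$ themselves, which is what membership in $\mbox{bd}(A)$ requires. Concretely, with constraints $\pr{C \vee D \mid A \wedge B} \geq 0.5$ and $\pr{E \mid C \vee D} \geq 0.5$, the identification of logically equivalent formulas makes $C \vee D$ a single node, the dependency graph contains the path $A \rightarrow (A \wedge B) \rightarrow (C \vee D) \rightarrow E$ with all intermediate nodes formulas, so $A$ is an lcn-parent of $E$ by Definition~\ref{definition:ParentLCN}; yet Step~2 only yields $A \rightarrow C$, $A \rightarrow D$, $B \rightarrow C$, $B \rightarrow D$, $C \rightarrow E$, $D \rightarrow E$, and no edge between $A$ and $E$. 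The same failure occurs already at $k=1$ when a shared formula node is the consequent of one $\mathcal{T}_D$ constraint (containing $A$) and the antecedent of another (containing $B$): there is then no single constraint with that node as both antecedent and consequent, so your ``otherwise'' branch has nothing to invoke. To be fair, the paper's own proof is silent on exactly these configurations --- its case list stops at two intermediate formula-nodes and tacitly assumes each pattern is generated by one constraint --- so you have correctly located the weak point; but your argument does not close it, and a complete proof must either rule out such cross-constraint formula chains or treat them explicitly.
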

\begin{proof} 
Consider a LCN with a dependency graph $\mathcal{D}$. 
If $B$ is a lcn-parent of $A$ with respect to $\mathcal{D}$,
then $B \rightarrow A$ or
$B \rightarrow \phi \rightarrow A$  
or
$B \rightarrow \varphi \rightarrow \phi \rightarrow A$  
or
$B \leftrightarrows \phi \rightarrow A$  
or
$B \rightarrow  \phi \leftrightarrows A$  
or
$B \leftrightarrows \phi \leftrightarrows A$  
for formula $\varphi$ and $\phi$; 
hence $B$ appears either as a parent of $A$   or a neighbor 
in the structure of the LCN.
Conversely, if $B$ is a parent or a neighbor of $A$ 
in the structure of the LCN, then one of the sequences of edges
already mentioned must be in $\mathcal{D}$, so $B$ is a
lcn-parent of $A$ in $\mathcal{D}$.
\end{proof}

The natural candidate for the concept of ``descendant'' in a structure, so as to mirror the concept of lcn-descendant, is as follows:

\begin{definition}\label{definition:QianDescendant}
If there is a directed path from $A$ to $B$ such that no intermediate 
node is a boundary node of $A$, then $B$ is a {\em strict descendant} of $A$.
\end{definition}

Using the previous definitions, we can state a local Markov condition that works for any graph but that, when applied to the structure of a LCN, has the same effect
as the original Markov condition LMC(LCN)
(Definition \ref{definition:LCNcondition}) applied to the LCN:
\begin{definition}[LMC(C-STR)]
\label{definition:LCNconditionStructures}
A node $A$ is independent, given its boundary,
of all nodes that are not $A$ itself nor strict descendants of $A$
nor boundary nodes of $A$.
\end{definition}
In symbols,
\begin{equation}
\label{equation:ConditionStructure}
X \; \indep \; \mathcal{N} \backslash \{ \{A\} \cup \mbox{sde}(A) \cup \mbox{bd}(A)\} \mid
\mbox{bd}(A),
\end{equation}
where $\mbox{sde}(A)$ denotes the set of strict descendants of $A$.

 We then have:

\begin{theorem}\label{theorem:EqualityMarkovConditions}
Given a LCN, the
Markov condition LMC(LCN) in Definition \ref{definition:LCNcondition}
is identical, with respect to the independence relations it imposes,
to the local Markov condition LMC(C-STR) 
in Definition \ref{definition:LCNconditionStructures}
applied to the structure of the LCN.
\end{theorem}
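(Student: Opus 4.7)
The two Markov conditions, LMC(LCN) and LMC(C-STR), share exactly the same template, so they impose identical independence assertions for every proposition $A$ provided two set identities hold: (i) $\mbox{lcn-pa}(A)=\mbox{bd}(A)$ in the structure, and (ii) $\mbox{lcn-de}(A)$ and $\mbox{sde}(A)$ agree on propositions outside the common set $\mbox{lcn-pa}(A)=\mbox{bd}(A)$. Any proposition already in the conditioning set is excluded on either side anyway, so (ii) is all that is really needed beyond (i). Identity (i) is exactly Lemma~\ref{theorem:ParentBoundary}, so my plan is to reduce the theorem to (ii), which is a statement about directed paths.

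I would prove (ii) by a two-sided correspondence between directed paths $A\to\cdots\to B$ in the dependency graph (avoiding intermediate propositions that are lcn-parents of $A$) and directed paths $A\to\cdots\to B$ in the structure (avoiding intermediate nodes in $\mbox{bd}(A)$). For lifting, a directed edge $X\to Y$ of the structure was placed by step~2 of the structure construction from some constraint with $X\in\varphi$ and $Y\in\phi$, which lifts to the dependency-graph sub-path $X\to\varphi\to\phi\to Y$; an undirected edge in the structure arose from step~1 (two propositions in a common $\mathcal{T}_U$-conditioned $\phi$) or by collapsing a bi-directed pair in step~3, and each likewise admits a sub-path through a formula-node. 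Concatenating these sub-paths preserves the sequence of intermediate propositions, so Lemma~\ref{theorem:ParentBoundary} takes care of the avoidance condition.

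Projection is the direction I expect to be the main obstacle. A directed dependency-graph path alternates between propositions and formula-nodes, so decomposes into blocks $Y_i\to(\mbox{formula-nodes})\to Y_{i+1}$. A case analysis on how each block exits $Y_i$---either via a conditioning formula $\varphi$, giving a directed edge $Y_i\to Y_{i+1}$ from step~2 (possibly demoted to undirected by step~3), or via a $\mathcal{T}_U$ back-edge into a conditioned $\phi$, giving an undirected edge $Y_i\sim Y_{i+1}$ from step~1---shows the block projects to a legal consecutive-node step in the structure. The delicate point is that the projection must remain \emph{directed}: for $B\notin\mbox{lcn-pa}(A)$, the first block cannot be of the back-edge type, because that would place $Y_1$ in a common $\mathcal{T}_U$-conditioned formula with $A$ and hence in $\mbox{bd}(A)$, contradicting the avoidance condition (note $Y_1\neq B$, since $B\notin\mbox{bd}(A)$). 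Therefore the first block projects to a genuinely directed edge, and the structural path is directed in the sense of Definition~\ref{definition:QianDescendant}. Combining both directions establishes (ii) and, together with (i), the theorem.
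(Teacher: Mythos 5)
Your proposal is correct and follows essentially the same route as the paper: Lemma~\ref{theorem:ParentBoundary} identifies $\mbox{lcn-pa}(A)$ with $\mbox{bd}(A)$, and the remaining content is the same two-way correspondence between directed dependency-graph paths avoiding lcn-parents and directed structural paths avoiding boundary nodes, including the key observation that the projected path cannot begin with an undirected edge (the paper phrases this as the structural path ``cannot be an undirected path''). One loose end: when you conclude that the first block ``projects to a genuinely directed edge,'' you rule out only the $\mathcal{T}_U$ back-edge case, but a step-2 edge $A \rightarrow Y_1$ could still be demoted to $A \sim Y_1$ by the step-3 bi-directed collapse; this is closed by the very same contradiction you already use, since then $Y_1 \in \mbox{ne}(A) \subseteq \mbox{bd}(A) = \mbox{lcn-pa}(A)$, so $Y_1$ is either a forbidden intermediate node or equals $B \notin \mbox{bd}(A)$.
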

\begin{proof}
To prove that Expressions (\ref{equation:LCNcondition}) and (\ref{equation:ConditionStructure})
are equivalent, we use the fact that $\mbox{lcn-pa}(A)$ and $\mbox{bd}(A)$ are identical
by Lemma \ref{theorem:ParentBoundary} and we prove (next) that
$\mathcal{N} \backslash \{ \{A\} \cup \mbox{lcn-de}(A) \cup \mbox{lcn-pa}(A)\}$ is equal to 
$\mathcal{N} \backslash \{ \{A\} \cup \mbox{sde}(A) \cup \mbox{bd}(A)\}$.

%Note first that, by construction, a directed path in the dependency graph of a LCN always produces a corresponding directed path (that goes through the same propositions) in the corresponding structure. 

Suppose then that 
$B \in \mathcal{N} \backslash \{ \{A\} \cup \mbox{lcn-de}(A) \cup \mbox{lcn-pa}(A)\}$ 
and assume, to obtain a contradiction, that
$B \in (\mathcal{N} \backslash \{ \{A\} \cup \mbox{sde}(A) \cup \mbox{bd}(A)\})^c$.
So, our assumption is that $B \in \{A\} \cup \mbox{sde}(A) \cup \mbox{bd}(A)$,
and the latter union can be written as 
$\{A\} \cup \mbox{bd}(A) \cup ( (\mbox{bd}(A))^c \cap \mbox{sde}(A) )$,
a union of disjoint sets. So it may be {\em either} that \\
$\bullet$ We have $B=A$,  a contradiction. \\
$\bullet$ We have $B \in \mbox{bd}(A)$. Then $B$ is either a parent or a neighbor, and in 
both cases there must be an edge from $B$ to $A$ in the dependency graph and then
$B \in \mbox{lcn-pa}(A)$, a contradiction. \\
$\bullet$ We have $B \in (\mbox{bd}(A))^c \cap \mbox{sde}(A)$. Then there
must be a directed path from $A$ to $B$ (with intermediate nodes that are
not boundary nodes) in the structure, and so there must be a corresponding directed
path from $A$ to $B$ (with intermediate nodes that are not parents) in the 
dependency graph; so $B \in \mbox{lcn-de}(A)$, a contradiction. \\
So, we always get a contradiction; hence if 
$B \in \mathcal{N} \backslash \{ \{A\} \cup \mbox{lcn-de}(A) \cup \mbox{lcn-pa}(A)\}$  
then
$B \in \mathcal{N} \backslash \{ \{A\} \cup \mbox{sde}(A) \cup \mbox{bd}(A)\}$.

Suppose now that we have a node $B$, distinct from $A$, such that 
$B \in \mathcal{N} \backslash \{ \{A\} \cup \mbox{sde}(A) \cup \mbox{bd}(A)\}$ 
and assume, to obtain a contradiction, that
$B \in ( \mathcal{N} \backslash \{ \{A\} \cup \mbox{lcn-de}(A) \cup \mbox{lcn-pa}(A)\} )^c$.
The reasoning that follows is parallel to the one in the last paragraph, but this case
has a few additional twists to take care of. 
So, our assumption is that 
$B  \in  \{A\} \cup \mbox{lcn-de}(A) \cup \mbox{lcn-pa}(A)$, and the latter union
can be written as 
$\{A\} \cup \mbox{lcn-pa}(A) \cup ( (\mbox{lcn-pa}(A))^c \cap \mbox{lcn-de}(A) )$,
again a union of disjoint sets. So it may be {\em either} that \\
$\bullet$ We have $B=A$,  a contradiction. \\
$\bullet$ We have $B \in \mbox{lcn-pa}(A)$. Then there is a directed edge from $B$ to $A$,
or a bi-directed edge between them, and $B \in \mbox{bd}(A)$, a contradiction.  \\
$\bullet$ We have $B \in (\mbox{lcn-pa}(A))^c \cap \mbox{lcn-de}(A)$. Then there
must be a path from $A$ to $B$ in the dependency graph (with intermediate nodes that are
not parents), and by construction of the
structure there must be a path from $A$ to $B$
in the structure (with intermediate nodes that are
not boundary nodes). The latter path cannot be an undirected path;
otherwise, it would have passed through a parent of $A$ in the dependency graph, 
contradicting the assumption that $B$ is a lcn-descendant. As there is a directed 
path that cannot go through a parent in the structure, $B \in \mbox{sde}(A)$, a contradiction. \\
So, we always get a contradiction; hence if 
$B \in \mathcal{N} \backslash \{ \{A\} \cup \mbox{sde}(A) \cup \mbox{bd}(A)\}$ 
then
$B \in  \mathcal{N} \backslash \{ \{A\} \cup \mbox{lcn-de}(A) \cup \mbox{lcn-pa}(A)\}$.
Thus the latter two sets are identical and the proof is finished.
\end{proof}

\section{Chain Graphs and Factorization}\label{section:NoDirectedCycles}

If the structure of a LCN is a directed acyclic graph, the LMC(LCN)
is actually the usual local Markov condition for directed
acyclic graphs as applied to Bayesian or credal networks
\cite{CozmanAI2000,Maua2020IJARthirty}.  
If instead all constraints in a LCN belong to $\mathcal{T}_U$, all
of them only referring to ``unconditional'' probabilities (that is,
$\varphi=\top$ in every constraint), 
then the structure of the LCN is an undirected graph
endowed with the usual local Markov condition for undirected graphs. 

These previous points can be generalized in a satisfying way 
{\em whenever the structure contains no directed cycle}:
\begin{theorem}\label{theorem:EqualityChainLCN}
If the structure of a LCN  is a chain graph, and probabilities
are positive, then the
Markov condition LMC(LCN) in Definition \ref{definition:LCNcondition}
is identical, with respect to the independence relations it imposes,
to the LMC(C) applied to the structure. 
\end{theorem}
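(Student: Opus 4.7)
The first move is to invoke Theorem \ref{theorem:EqualityMarkovConditions}, which already shows that LMC(LCN) and LMC(C-STR) applied to the structure impose identical independence relations. The claim thus reduces to showing that, on a chain graph with a strictly positive joint distribution, LMC(C-STR) and LMC(C) impose the same set of independence relations. Since the end of Section~\ref{section:Graphs} records that on a chain graph with everywhere-positive probabilities the conditions LMC(C), GMC(C), and the chain-graph factorization property are mutually equivalent, it suffices to show that LMC(C-STR) is equivalent to that factorization as well.

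For the direction factorization $\Rightarrow$ LMC(C-STR), I would route through GMC(C). Given a node $A$ and any $B \in \mathcal{N} \setminus (\{A\} \cup \mbox{sde}(A) \cup \mbox{bd}(A))$, I form the moral graph of the smallest ancestral set $\mathcal{B}$ containing $\{A,B\} \cup \mbox{bd}(A)$ and check that $\mbox{bd}(A)$ separates $A$ from $B$ in it. The points to verify are: that no directed path from $A$ reaches $B$ without passing through $\mbox{bd}(A)$, because $B \notin \mbox{sde}(A)$; that no moralization introduces a direct $A$--$B$ edge, because any common child of $A$ and $B$ lying in $\mathcal{B}$ would, together with chain-graph acyclicity, force $B$ into $\mbox{bd}(A)$; and that any undirected step out of $A$ lands inside $\mbox{bd}(A)$. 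Separation then yields $A \indep B \mid \mbox{bd}(A)$ via GMC(C).

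For the converse, LMC(C-STR) $\Rightarrow$ factorization, the plan is to mimic the standard chain-graph argument. I would order the chain components $T_1, \dots, T_n$ topologically, so that parents of nodes in $T_i$ lie in earlier components, and prove by induction on $i$ that the joint distribution decomposes as $\prod_i \pr{T_i \mid \mbox{bd}(T_i)}$. Within each chain component, LMC(C-STR) applied to every node, combined with positivity (which licenses the semi-graphoid axioms, in particular contraction and weak union), should let one assemble factors of the form $\pr{A \mid \mbox{bd}(A)}$ and recompose them into the clique-based factorization on the moralized chain-component graph described at the end of Section~\ref{section:Graphs}.

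The converse direction is what I expect to be the main obstacle. LMC(C-STR) only asserts independence against strict descendants rather than all non-boundary non-descendants, so reconstructing the full chain-graph factorization requires repeatedly combining strict-descendant statements across the topological order on chain components, via semi-graphoid manipulations under positivity. The forward direction, by contrast, reduces to a graph-theoretic check about paths in moralized ancestral sets.
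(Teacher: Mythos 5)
Your overall skeleton is the paper's: reduce via Theorem~\ref{theorem:EqualityMarkovConditions} to comparing LMC(C-STR) with LMC(C) on the structure, and handle the direction LMC(C)/factorization $\Rightarrow$ LMC(C-STR) by passing to GMC(C) and checking separation of $A$ from the non-strict-descendant, non-boundary nodes by $\mbox{bd}(A)$ in a moralized ancestral graph; your list of things to verify there (no directed path avoiding $\mbox{bd}(A)$, no moralization edge into $A$ reaching such a $B$, undirected steps out of $A$ landing in $\mbox{bd}(A)$) matches the paper's argument. One slip in that direction: you establish separation of $A$ from each $B$ individually in the moral graph of the smallest ancestral set containing $\{A,B\}\cup\mbox{bd}(A)$, which via GMC(C) only yields the pairwise statements $A \indep B \mid \mbox{bd}(A)$; LMC(C-STR) is the joint statement $A \indep \mathcal{N}\setminus(\{A\}\cup\mbox{sde}(A)\cup\mbox{bd}(A)) \mid \mbox{bd}(A)$, and Composition is not a valid inference rule for probabilities. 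The fix is to take $\mathcal{N}_3$ to be the whole set at once (the paper works in $\mathcal{G}^{ma}((\mbox{sde}(A))^c)$), where the same path argument applies.

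The genuine gap is in the converse direction, which you flag as ``the main obstacle'' and leave as a plan to re-derive the chain-graph factorization by induction over chain components. That re-derivation is unnecessary, and as sketched it is not a proof. The observation you are missing is purely set-theoretic: since $\mbox{sde}(A) \subseteq \mbox{de}(A)$, the set $\mathcal{N}\setminus(\{A\}\cup\mbox{sde}(A)\cup\mbox{bd}(A))$ \emph{contains} the set $\mathcal{N}\setminus(\{A\}\cup\mbox{de}(A)\cup\mbox{bd}(A))$; the paper makes this precise by partitioning the former into the latter plus the ``weak descendants'' $\mbox{wde}(A)$ (Expression~(\ref{equation:StrictWeak})). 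Hence LMC(C-STR) at a node is a \emph{stronger} independence assertion than LMC(C) at that node, and a single application of the Decomposition property gives LMC(C) immediately, with no positivity and no induction; the known equivalence of LMC(C) with GMC(C) and the factorization under positivity then does the rest. In short, you have the hard and easy directions reversed: the direction you call the main obstacle is one line, and the direction you treat as a routine graph-theoretic check is where the real work (and the positivity assumption) lives.
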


Before we prove this theorem, it should be noted that sets of descendants
and strict descendants are not identical. This is easy to see in graphs
with directed cycles:  in Figure \ref{figure:GraphExamples}.b, node $B$
has descendants $\{A,C,D\}$ and strict descendants $\{A,D\}$. But even 
in chain graphs we may have differences: for instance, suppose that
in Figure \ref{figure:GraphExamples}.e we add a single directed edge 
from $D$ to a new node $E$; then
$E$ is the only descendant of $B$, but $B$ has no strict descendants.

In fact, the descendants of a node $A$ can be divided into two sets. 
First, a node $B$ is in the first set if and only if there is at least
one directed path from $A$ to $B$ that starts with a directed edge.
All of those nodes are strict descendants of $A$ when the structure
is a chain graph. 
Second, a node $C$ is in the second set if and only if all directed 
paths from $A$ to $C$ start with an undirected edge. 
Then all directed paths from $A$ to $C$ first reach a node that 
is in the boundary of $A$  
and consequently $C$ is {\em not} a strict descendant of~$A$. 
When the structure is a chain graph,
the first set is thus exactly $\mbox{sde}(A)$, and
we migh refer to the second set as $\mbox{wde}(A)$, the set
of ``{\em weak}'' descendants of $A$. 
By construction we have $\mbox{sde}(A) \cap \mbox{wde}(A)=\emptyset$.
Moreover, we conclude (see Figure~\ref{figure:DescendantsDiagram}) that
\[
(\mbox{sde}(A))^c = (\mbox{de}(A))^c \; \cup \; \mbox{wde}(A),
\]
from which we have that 
$\mathcal{N} \backslash (\{A\} \cup \mbox{sde}(A) \cup \mbox{bd}(A)) =
 \{A\}^c \cap ((\mbox{de}(A))^c \; \cup \; \mbox{wde}(A)) \cap (\mbox{bd}(A))^c$.
Note that $\left( \{A\}^c \cap \mbox{wde}(A) \cap (\mbox{bd}(A))^c \right)=\mbox{wde}(A)$
by construction, hence $\mathcal{N} \backslash (\{A\} \cup \mbox{sde}(A) \cup \mbox{bd}(A))$
is the union of two disjoint sets:
\begin{equation}
\label{equation:StrictWeak}
\left(  \{A\}^c \cap (\mbox{de}(A))^c \cap (\mbox{bd}(A))^c  \right) \;\; \cup \;\;
  \mbox{wde}(A). 
\end{equation}

\begin{figure}
    \centering
\begin{tikzpicture} 
\draw[pattern=dots] (1,1) rectangle (7,3.5);
\draw[gray,fill=gray] (4,1) rectangle (6,2.5);
\draw[fill=lightgray] (2.5,1.7) rectangle (5.5,3.3);
\draw[pattern=dots] (2.5,1.7) rectangle (5.5,3.3);
\draw[darkgray,fill=darkgray] (2,1) rectangle (4,2.5);
\draw[white,fill=white] (4,2.6) circle (0.3);
\draw[pattern=dots] (4,2.6) circle (0.3);
\node at (4,2.6) {$A$};
\node[fill=lightgray,rounded corners] at (4.8,2.1) {$\mbox{bd}(A)$}; 
\node[white] at (3,1.3) {$\mbox{sde}(A)$};
\node[white] at (5,1.3) {$\mbox{wde}(A)$};
\draw (1,1) rectangle (7,3.5);
\end{tikzpicture}
\caption{Nodes of a LCN whose structure is a chain graph. The dotted
area contains the nodes that are not descendants.
In a chain graph, the strict descendants cannot be
in the boundary.}
\label{figure:DescendantsDiagram}
\end{figure}
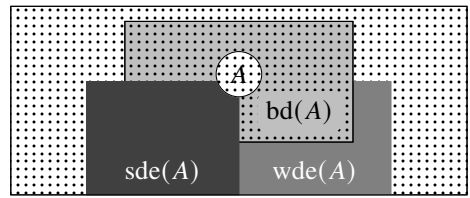

\vspace*{2ex}

\begin{proof}
Suppose we have the LMC(C-STR); so, for any node $A$, we have
Expression (\ref{equation:ConditionStructure}). Using
Expression (\ref{equation:StrictWeak}), we have
$A \indep ((\{A\}^c \cap (\mbox{de}(A))^c \cap (\mbox{bd}(A))^c) 
                                \cup \mbox{wde}(A))|\mbox{bd}(A)$;
using the Decomposition property of probabilities,\footnote{The
Decomposition property states that $X \indep Y \cup W|Z$ implies
$X \indep Y|Z$ for sets of random variables $W, X, Y, Z$ \cite{Koller2009}.}
we
obtain $A \indep \{A\}^c \cap (\mbox{de}(A))^c \cap (\mbox{bd}(A))^c|\mbox{bd}(A)$.
Thus the LMC(C) holds. 

Suppose  the LMC(C) holds; given then positivity condition,
  the GMC(C) holds \cite{Cowell99}. 
Take a node $A$ and suppose there is a node 
$B \in \mathcal{N} \backslash (\{A\} \cup \mbox{sde}(A) \cup \mbox{bd}(A))$.
We now prove that $B$ is separated from $A$ by $\mbox{bd}(A)$ in
\begin{eqnarray*}
\mathcal{G}^*_A \!\!\!\! & \doteq & \!\!\!\!\mathcal{G}^{ma}(\{A\}\!\cup\!(\{A\}^c\!\cap\!(\mbox{sde}(A))^c\!\cap\!(\mbox{bd}(A))^c)\!\cup\!\mbox{bd}(A)) \\
& = &  \mathcal{G}^{ma}((\mbox{sde}(A))^c),
% \;\; = \;\; \mathcal{G}^{ma}(\mbox{de}(A)^c \cup \mbox{wde}(A)),
\end{eqnarray*}
and consequently the GMC(C) leads to 
$A \indep (\{A\}^c \cap (\mbox{sde}(A))^c \cap (\mbox{bd}(A))^c)|\mbox{bd}(A)$ as desired.
Note  that, by construction,
nodes in $\mbox{sde}(A)$ cannot be in the ancestral set of
any node in $\mbox{wde}(A)$, so $\mbox{sde}(A)$ is not in $\mathcal{G}^*_A$
and consequently $\mathcal{G}^*_A$ is the moral graph of the graph consisting
of nodes in $(\mbox{sde}(A))^c$ and edges among them in the structure. 
Now reason as follows.
% %%%%
% First, we may have $B \in ((\{A\}^c \cap (\mbox{de}(A))^c \cap (\mbox{bd}(A))^c)$.
% Every path from $A$ to $B$ in $\mathcal{G}^{ma}(\mbox{de}(A)^c \cup \mbox{wde}(A))$
% must be intercepted by nodes in $\mbox{bd}(A)$, except for paths that start with
% an edge $A \sim C$ added to $\mathcal{G}^{ma}(\mbox{de}(A)^c \cup \mbox{wde}(A))$ 
% because $A$ and $C$ have directed edges to a common chain component. However,
% such an edge cannot exist, as either the common chain component consists of 
% nodes already in $(\mbox{de}(A))^c$ (but by assumption the LMC(C) holds so
% there cannot be an open path between $A$ and $B$ that is generated by such nodes),
% or the common chain component contains nodes in $\mbox{wde}(A)$ (but then there
% is a directed path from $A$ to some node in $\mbox{wde}(A)$ such that this path
% starts with a directed edge out of $A$, a contradiction). 
% %%%%
% Second, we may have $B \in \mbox{wde}(A)$. 
%The reasoning is now similar to the one in the previous paragraph.
%There cannot be a path from $A$ to $B$ in $\mathcal{G}^*_A$ 
%starting from $A$ to one of its parents, as there is no directed path in the structure.
Paths that go from $A$ to a parent or neighbor are obviously blocked by $\mbox{bd}(A)$.
The only possible ``connecting'' paths from $A$ to $B$ in 
$\mathcal{G}^*_A$ 
would have to start with an undirected edge from $A$ to say $C$, an edge
added to $\mathcal{G}^*_A$ 
because $A$ and $C$ have directed edges to a common chain component. 
Now suppose $A$ points to node $D$ in this latter chain component.
Then $D$ is not in the boundary of $A$ (no directed cycles)
and the only possible way for $D$ to be in $\mathcal{G}^*_A$ 
is if there is a directed path from $D$ to a node $E$ in $\mbox{wde}(A)$, but the resulting
directed path from $A$ to $E$ would create a contradiction (as $E$ would then
not be in $\mbox{wde}(A)$). Such ``connecting'' paths from $A$ to $B$ are thus impossible
in $\mathcal{G}^*_A$.
Hence we must have separation of $A$ and $B$ by $\mbox{bd}(A)$ in $\mathcal{G}^*_A$. 
\end{proof}

The significance of the previous theorem is that, assuming 
that all probabilities are positive, the local
Markov condition for a chain graph is equivalent both to the
global Markov condition and to the factorization property of
chain graphs. This  allows us to
break down the probability distribution over all random
variables in a LCN in hopefully much smaller pieces
that require less specification effort.

%Note that even a dependency graph with several directed cycles may lead to a structure without directed cycles. 

\begin{example}\label{example:Factorization}
Figure \ref{figure:DependencyStructure}.b
depicts a structure that is in fact a chain graph.
We can group variables to obtain  chain components
$F_{1,2,3}$ and $S_{1,2,3}$ and
draw a directed acyclic graph with the chain
components, as in
Figure \ref{figure:DependencyStructure}.c.
The joint probability distribution factorizes as 
Expression (\ref{equation:BayesNet}):
\begin{eqnarray*}
\pr{F_{1,2,3} = f,S_{1,2,3} = s,C_1\!=\!c_1,C_2\!=\!c_2,C_3\!=\!c_3} & = &  \\
& & \hspace*{-7cm}  \pr{F_{1,2,3}=f}\pr{S_{1,2,3}=s|F_{1,2,3}=f} \nonumber \\
& & \hspace*{-6cm} \pr{C_1=c_1|S_{1,2,3}=s}\pr{C_2=c_2|S_{1,2,3}=s} \nonumber \\
& & \hspace*{-5cm} \pr{C_3=c_3|S_{1,2,3}=s}, \nonumber
\end{eqnarray*}
where $f$ is a configuration of the random variables in $F_{1,2,3}$, while
$s$ is a configuration of the random variables in $S_{1,2,3}$.
Because there are no independence relations ``inside'' the
chain components, this factorization is guaranteed even if
some probability values are equal to zero \cite{Cowell99}. 

Suppose that the three constraints $0.5 \leq \pr{F_i|F_j \wedge F_k} \leq 1$
were replaced so that we had a similar structure but
instead of a single chain component with $F_1$, $F_2$, $F_3$, suppose we had
two chain components, one with $F_1$ and $F_2$, the other with $F_2$ and $F_3$.
The chain components might be organized as in Figure \ref{figure:DependencyStructure}.d.
If all probabilities are positive, that chain graph leads
to a factorization of the joint probability distribution similar
to the previous one, but now
$\pr{F_{1,2,3}=f_1f_2f_3}=
   \mathbb{F}_1(F_{1,2}=f_1f_2) \mathbb{F}_2(F_{2,3}=f_2f_3)$,
% \[
% \begin{array}{l}
% \mathbb{F}_1(F_{1,2}=f_1f_2) \mathbb{F}_2(F_{2,3}=f_2f_3) \\
% \hspace*{3mm} \pr{S_{1,2,3}=s|F_{1,2,3}=f_1f_2f_3} \\
% \hspace*{6mm} \pr{C_1=c_1|S_{1,2,3}=s}\pr{C_2=c_2|S_{1,2,3}=s} \\
% \hspace*{9mm} \pr{C_3=c_3|S_{1,2,3}=s}, 
% \end{array}
% \]
where $\mathbb{F}_1$ and $\mathbb{F}_2$ are positive functions,
and the values of $F_1$, $F_2$ and $F_3$ are indicated by
$f_1$, $f_2$, $f_3$ respectively.
\end{example}

In the previous paragraph the assumption that probabilities are positive
is important: when some probabilities are zero, there is no guarantee
that a factorization   actually exists~\cite{Moussouris74}.
This is unfortunate  as a factorization leads to valuable computational
simplifications. 
One strategy then is to guarantee that all configurations do
have positive probability, possibly by adding language directives
that bound probabilities from below. A language command might consist
of explict bounds, say $0.001$, or even a direct guarantee of
positivity without an explicit bounding value. 
This solution may be inconvenient if we do have some hard constraints
in the domain. For instance, we may impose that $A \vee B$ (in which
case $\pr{\neg A \wedge \neg B}=0$). 
However, is is still possible to obtain a factorization
if hard constraints are imposed. Say we have a formula, for instance
$A \vee B$, that must be satisfied. We treat it as a constraint
$1 \leq \pr{A \vee B} \leq 1$ in $\mathcal{T}_U$, thus guaranteeing that
there is a clique containing its propositions/random variables.
Then we remove the impossible configurations of these random variables
(in our running example, we remove $A = B = 0$), thus reducing the
number of possible configurations for the corresponding clique.
A factorization is obtained again in the reduced space of
configurations, provided the remaining configurations  do have positive probabilities.
%%%
Finally, an entirely different strategy may be pursued:
adopt a stronger Markov condition
that guarantees factorization (and hence global independence relations)
in all circumstances.
\citet{Moussouris74} has identified one such condition, where a system
is {\em strongly Markovian} in case a Markov condition holds for the system
and suitable sub-systems. That (very!) strong condition forces zero
probabilities to be, in a sense, localized, so that probabilities 
satisfy a nice factorization property;
alas, the condition cannot be guaranteed for all graphs, and its consequences 
have not been explored in depth so far.

%In any case, from now on we take that a LCN whose structure has no directed cycles satisfies not only the local Markov condition for chain graphs, but also the factorization and the global Markov condition for chain graphs. 

\section{Directed Cycles}

As noted already, existence of a factorization is a very desirable 
property for any probabilistic formalism: not only it simplifies 
calculations, but it also emphasizes modularity in modeling and ease
of understanding. 
In the previous section we have shown that LCNs whose structure is
a chain graph do have, under a positivity assumption, a well-known
factorization property. We now examine how that result  
might be extended when structures have directed cycles. 

The LMC(LCN) is, of course, a local condition that can be applied
even in the presence of directed cycles. However, local Markov conditions
may not be very satisfactory in the presence of directed cycles, as
a simple yet key example suggests:

\begin{example}\label{example:LongCycle}
Take a LCN whose dependency graph is a long cycle 
$A_1 \rightarrow A_2 \rightarrow \dots A_k \rightarrow A_1$,
for some large $k$. No $A_i$ has any non-descendant non-parent. 
And no $A_i$ has any non-strict-descendant non-parent either.
The local Markov conditions we have contemplated do not
impose {\em any} independence relation. 
\end{example}

% In this example there is no difference between the LMC(C) and
% the LMC(C-STR), but it is worth stressing that the former is
% significantly weaker than the latter: for instance, for the
% directed graph in Figure \ref{figure:GraphExamples}.b, the
% LMC(C) yields {\em no} independence relation, while the
% LMC(C-STR) yields $A \indep D|B,C$, $A \indep C,D|B$,
% $B \indep C|A,B$ and $C \indep A,B||D$. 

Local conditions seem too weak when there are long cycles.
On the other hand, a global condition may work fine in those settings.
For instance, apply the GMC(C) to the graph in Example \ref{example:LongCycle};
the condition does impose non-trivial independence relations such as
$A_1 \indep A_3,\dots,A_{k-1}|A_2,A_k$ and
$A_2 \indep A_4,\dots,A_{k}|A_1,A_3$
(and more generally,
for any $A_i$ with $2<i<k-2$,
we have $A_i \indep A_1,\dots,A_{i-2},A_{i+2},A_k | A_{i-1},A_{i+1}$).

At this point it is mandatory to examine results by \citet{Spirtes95},
as he has studied local and global conditions for directed graphs,
obtaining factorization results even in the presence of directed cycles.
The local Markov condition adopted by \citet{Spirtes95} is just
the one adopted for directed graphs in Section \ref{section:Graphs}:
\begin{definition}[LMC(D)]\label{definition:SpirtesLMC}
A node $A$ is independent, given its parents,
of all nodes that are not $A$ itself nor descendants nor parents of $A$. 
\end{definition}
% As a digression, we note that a variant of the latter condition,
% in line with the LMC(LCN), might be:
% \begin{definition}[LMC(D-STR)]\label{definition:SpirtesVariantLMC}
% A node is independent of its non-strict-descendants non-parents given its parents. 
% \end{definition} 
%%%
The global Markov condition adopted by \citet{Spirtes95} is just the
GMC(C) (Definition \ref{definition:SpirtesGMC}).
Spirtes shows that the LMC(D) is not equivalent to the GMC(C) 
for directed graphs with directed cycles.
This observation can be adapted to our setting as follows:

\begin{example}\label{example:Spirtes}
Suppose we have a LCN whose dependency graph is depicted
in Figure \ref{figure:GraphExamples}.d. 
For instance, we might have 
$0.1 \leq \pr{X|Y} \leq 0.2$ whenever $Y \rightarrow X$ is an edge
in that figure. 
Assume all configurations have positive probability.

The LMC(D) applied to this dependency graph yields only 
$A \indep C$, $B \indep C|A,D$ and $A \indep D|B,C$.

However, if we apply the GMC(C) directly
to the dependency graph, we {\em do not} get the same independence
relations: then we only
obtain  $A \indep C$ and $A \indep C|B,D$, perhaps a surprising
result (in this case, the graph $\mathcal{G}^{ma}(A,B,C,D)$ is
depicted in Figure \ref{figure:MoralGraphs}.d).
\end{example}

\citet[Lemma 3]{Spirtes95} has shown that, for a directed graph
that may have directed cycles, a positive probability distribution
over the random variables is a product of factors, one per random
variable, iff the distribution satisfies the GMC(C) for the graph.
Note that the GMC(C) is equivalent, {\em for
graphs without directed cycles}, under
a positivity assumption, to the LMC(C). 

However, there is a difficulty in applying Spirtes' result to our setting.

\begin{example}\label{example:SpirtesContinued}
Consider Example \ref{example:Spirtes}.
The structure of the LCN is the chain graph in Figure \ref{figure:GraphExamples}.e, 
and we know that the LMC(LCN) is equivalent to the LMC(C) and GMC(C) for
chain graphs. In fact, the LMC(LCN), the LMC(C), the LMC(C-STR), and the GMC(C)  
also yield only 
$A \indep C$, $B \indep C|A,D$ and $A \indep D|B,C$ when applied to the structure.
Clearly this is not the same set of independence relations imposed by
the GMC(C) applied to the dependency graph (as listed in Example \ref{example:Spirtes}).
There is a difference between undirected and bi-directed edges
when it comes to the GMC(C). 
\end{example}

%Spirtes focused on graphs where each node is a random variable; he did not look into dependency graphs. We can still benefit from his results,  either by translating his definitions to the setting of dependency graphs, or by looking at examples where dependency graphs and structures are  identical. 

The message of this example is that we cannot impose the GMC(C)
on (a suitable version of)   directed dependency graphs and hope
to keep the LMC(LCN) by \citet{Qian2022}. 
If we want the factorization induced
by the GMC(C) on (a version of) dependency graphs, we must
somehow modify the original semantics for LCNs by \citet{Qian2022}.

%This is not a problem in itself, even if, as the previous example shows, not all consequences of the LMC(LCN) seem reasonable when this condition is applied to a directed graph with directed cycles.

It is worth summarizing the discussion so far.
First, it is well-known that the LMC(C) and the GMC(C) are equivalent,
under a positivity assumption, for chain graphs (both conditions
may differ in the presence of directed cycles). 
Second, we know that the LMC(LCN) for dependency graphs is
equivalent  to 
the LMC(C-STR) with respect to the corresponding structures. 
And if the structure is a chain graph, then the LMC(C-STR) and 
the LMC(C) are equivalent when applied to the structure. 
But for general dependency graphs any
local condition seems quite weak. 
We might move to general dependency graphs by adapting
the GMC(C) to them, so as to look for a factorization result;
however, we saw that the result is not equivalent to
what we obtained by applying the GMC(C) to structures.

In the next section we examine   alternative semantics
that are based on applying the GMC(C) to structures (possibly
with directed cycles).
%%%
Before we jump into that, it is worth noticing that
there are many other relevant results in the literature besides
the ones by Spirtes. 
For instance, {\em dependency networks} \cite{Heckerman2000}
allow for directed cycles and do have a modular specification
scheme; they have only an approximate factorization, but that
may be enough in applications. Another proposal has been
advanced by \citet{Schmidt2009}, where directed cycles are
allowed and the adopted Markov condition looks only at the 
{\em Markov blanket} of nodes; it does not seem that a factorization
has been proven for that proposal, but it is attractive in its simplicity.
There are also many kinds of graphs that have been contemplated to
handle causal loops and dynamic feedback systems 
\cite{Baier2022,Hyttinen2012,Bongers2021}. 
This is indeed
a huge literature, filled with independence conditions and 
factorization properties, to which we cannot do justice in 
the available space. It is necessary to examine whether
we can bring elements of those previous efforts into
LCNs. We leave a more detailed study for 
the future.
 
\section{New Semantics for LCNs}

In this section we explore new semantics for LCNs by
applying the GMC(C) to structures.
%%%
This is motivated by the weakness of local conditions as discussed
in the previous section, and also on the fact that
a condition based on moralized graphs is the most obvious route
to factorization properties (as the Hammersley-Clifford theorem can then
be invoked under a positivity assumption~\cite{Moussouris74}). 

Here is a (new) semantics: a  LCN represents
the set of probability distributions over its nodes such that all
constraints in the LCN are satisfied, and each distribution satisfies
the GMC(C) with respect to the structure. 
Note that the GMC(C) is equivalent to the LMC(LCN) when a structure
is a chain graph, but these conditions may differ in the presence
of directed cycles. 

The path to a factorization result is then as follows. Take the mixed-structure
and, for each node $A$, build a set $\mathcal{C}_A$ with all nodes 
that belong to directed cycles starting at $A$. If there a directed 
cycle in a set $\mathcal{C}_B$ such that $B$ is in $\mathcal{C}_A$, then
merge $\mathcal{C}_A$ and $\mathcal{C}_B$ into a set $\mathcal{C}_{A,B}$; 
repeat this until 
there are no more sets to merge (this must stop, in the worst case
with a single set containing all nodes). For each set, replace all
nodes in the set by a single ``super''-node, directing all edges
in and out of nodes in the set to this super-node. The resulting
graph has no directed cycles, so the GMC(C) applied to it
results in the usual factorization over chain components of 
the resulting graph. Now each super-node is in fact a set of 
nodes that can be subject to further factorization, even though
it is an open question whether a decomposition can be obtained
with factors that are directly related to graph properties. 

% \begin{theorem}
% Suppose we have a LCN with the GMC(LCN); then it factorizes 
% according to the moral ancestral graph under a positivity assumption.
% \end{theorem}
% \begin{proof}
% Direct application of the Hammersley-Clifford theorem \cite{Moussouris74}
% to the moralized ancestral graph containing all proposition-nodes.
% \end{proof}

To continue, we suggest that, 
instead of using structures as mere secondary objects that help
us clarify the meaning of dependency graphs,  
structures should be the primary tools in dealing with LCNs.
That is, we should translate every LCN to its structure (without 
going through the dependency graph) and then apply appropriate
Markov conditions there. Given a LCN, we can build its structure
by taking every proposition as a node and then:
\begin{enumerate}
\item For each constraint $\alpha \leq \pr{\phi|\varphi} \leq \beta$ in
$\mathcal{T}_U$, add an undirected arrow between each pair of
proposition-nodes in $\phi$.
\item For each constraint $\alpha \leq \pr{\phi|\varphi} \leq \beta$ 
add a directed edge from each proposition-node in
$\varphi$ to each proposition-node in $\phi$ (if $\varphi$ is $\top$,
there is no such edge to add).
\item Remove multiple identical edges.
\item For each pair of nodes $A$ and $B$,
if there is a bi-directed edge $A \leftrightarrows B$ between them, 
replace the two edges by a single undirected edge $A \sim B$.  
\end{enumerate}
For instance, the procedure above
goes directly from the LCN in Example \ref{example:Smokers}
to the structure in Figure \ref{figure:DependencyStructure}.b. 

When we think of structures this way, we might wish 
to differentiate the symmetric connections that appear when a pair of
propositions appear in a formula $\phi$ from the mutual influences that one
proposition is conditioned on the other and vice-versa.

An alternative semantics would be as follows. Take a LCN and build a 
{\em mixed-structure} by going through the first two steps above.
That is, create a node per proposition that appears in the LCN;
then take each constraint in $\mathcal{T}_U$ and add undirected
edges between any two propositions in $\phi$, and finally take
each constraint and add a directed edge from each proposition that
appears in $\varphi$ to each proposition that appears in the 
corresponding $\phi$. 
Figure \ref{figure:BidirectedStructures} depicts the mixed-structure
for Example \ref{example:Smokers}.

Now adopt: a  LCN represents
the set of probability distributions over its nodes such that all
constraints in the LCN are satisfied, and each distribution satisfies
the GMC(C) with respect to the mixed-structure.  

\begin{figure}[t]
\centering
\begin{tikzpicture}
\node[draw,rectangle,rounded corners,fill=yellow] (f1) at (1,2.5) {$F_1$};
\node[draw,rectangle,rounded corners,fill=yellow] (f2) at (3.5,2.5) {$F_2$};
\node[draw,rectangle,rounded corners,fill=yellow] (f3) at (6,2.5) {$F_3$};
\node[draw,rectangle,rounded corners,fill=yellow] (s1) at (1,1) {$S_1$};
\node[draw,rectangle,rounded corners,fill=yellow] (s2) at (3.5,1) {$S_2$};
\node[draw,rectangle,rounded corners,fill=yellow] (s3) at (6,1) {$S_3$};
\node[draw,rectangle,rounded corners,fill=yellow] (c1) at (1,0) {$C_1$};
\node[draw,rectangle,rounded corners,fill=yellow] (c2) at (3.5,0) {$C_1$};
\node[draw,rectangle,rounded corners,fill=yellow] (c3) at (6,0) {$C_1$};
\draw[->,>=latex,thick] (s1)--(c1); 
\draw[->,>=latex,thick] (s2)--(c2); 
\draw[->,>=latex,thick] (s3)--(c3); 
\draw[->,>=latex,thick,blue] (f1) -- (s1);
\draw[->,>=latex,thick,blue] (f1)--(s2);
\draw[->,>=latex,thick,blue] (f2) -- (s2);
\draw[->,>=latex,thick,blue] (f2)--(s3);
\draw[->,>=latex,thick,blue] (f3) -- (s3);
\draw[->,>=latex,thick,blue] (f3) -- (s1);

\draw[->,>=latex,thick,blue] (f1) to[out=10,in=170] (f2);
\draw[->,>=latex,thick,blue] (f2) to[out=-170,in=-10] (f1); 
\draw[->,>=latex,thick,blue] (f2) to[out=10,in=170] (f3);
\draw[->,>=latex,thick,blue] (f3) to[out=-170,in=-10] (f2);
\draw[->,>=latex,thick,blue] (f1) to[out=15,in=165] (f3);
\draw[->,>=latex,thick,blue] (f3) to[out=-165,in=-15] (f1);

\draw[thick,blue] (s1) -- (s2); 
\draw[thick,blue] (s2) -- (s3);  
\draw[thick,blue] (s3) to[out=-165,in=-15] (s1);

%\draw[thick,blue] (s1)--(s2);
%\draw[thick,blue] (s2)--(s3);
%\draw[thick,blue] (s1) to[out=-15,in=-165] (s3);
\end{tikzpicture}
\caption{The mixed-structure for the LCN in Example \ref{example:Smokers}.}
\label{figure:BidirectedStructures}
\end{figure}
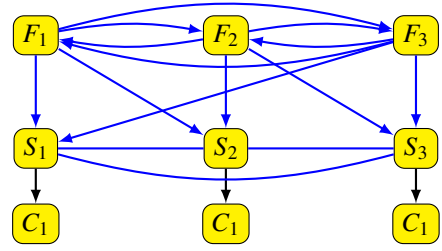

% Again, we obtain a factorization result:
% \begin{theorem}
% Suppose we have a LCN with the GMC(LCN); then it factorizes 
% according to the moral ancestral graph under a positivity assumption.
% \end{theorem}
% \begin{proof}
% Direct application of the Hammersley-Clifford theorem \cite{Moussouris74}
% to the moralized ancestral graph containing all proposition-nodes.
% \end{proof}

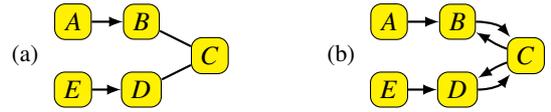
\begin{figure}[t]
\centering
\begin{tikzpicture}[scale=0.9]
\node[draw,rectangle,rounded corners,fill=yellow] (a) at (1,2) {$A$};
\node[draw,rectangle,rounded corners,fill=yellow] (b) at (2,2) {$B$};
\node[draw,rectangle,rounded corners,fill=yellow] (c) at (3,1.5) {$C$};
\node[draw,rectangle,rounded corners,fill=yellow] (e) at (1,1) {$E$};
\node[draw,rectangle,rounded corners,fill=yellow] (d) at (2,1) {$D$};
\draw[->,>=latex,thick] (a)--(b);
\draw[thick] (b)--(c);
\draw[thick] (c)--(d);
\draw[->,>=latex,thick] (e)--(d);
\node at (0.3,1.5) {\small (a)};
\end{tikzpicture}
\hspace*{1cm}
\begin{tikzpicture}[scale=0.9]
\node[draw,rectangle,rounded corners,fill=yellow] (a) at (1,2) {$A$};
\node[draw,rectangle,rounded corners,fill=yellow] (b) at (2,2) {$B$};
\node[draw,rectangle,rounded corners,fill=yellow] (c) at (3,1.5) {$C$};
\node[draw,rectangle,rounded corners,fill=yellow] (e) at (1,1) {$E$};
\node[draw,rectangle,rounded corners,fill=yellow] (d) at (2,1) {$D$};
\draw[->,>=latex,thick] (a)--(b);
\draw[->,>=latex,thick] (b) to[out=0,in=130] (c);
\draw[->,>=latex,thick] (c) to[out=160,in=-30] (b);
\draw[->,>=latex,thick] (c) to[out=-160,in=30] (d);
\draw[->,>=latex,thick] (d) to[out=0,in=-130] (c);
\draw[->,>=latex,thick] (e)--(d);
\node at (0.3,1.5) {\small (b)};
\end{tikzpicture} 
\caption{Structures and mixed-structures in Example \ref{example:MixedStructure}.}
\label{figure:MixedStructure}
\end{figure}

The next example emphasizes the differences between semantics.

\begin{example}\label{example:MixedStructure}
Suppose we have a LCN with constraints $\pr{B|A} = 0.2$,
$\pr{D|E} = 0.3$, $\pr{B \vee C} = 0.4$, $\pr{C \vee D} = 0.5$.
Both the structure and the mixed-structure of this LCN is 
depicted in Figure \ref{figure:MixedStructure}.a.
%$A \rightarrow B - C - D \leftarrow E$.
Consider another LCN with constraints 
$\pr{B|A \wedge C} = 0.2$, $\pr{C|B \wedge D} = 0.3$,
and $\pr{D|C \wedge E} = 0.4$. 
This second LCN has the same structure as the first one, but the mixed-structure
is depicted in Figure \ref{figure:MixedStructure}.b. 
%$A \rightarrow B \leftrightarrows C \leftrightarrows D   \leftarrow E$.
The GMC(C) produces quite different sets of independence relations when applied
to these distinct mixed-structures; 
for instance, $A,B \indep D | C, E$ in the first LCN, but not necessarily in the second;
              $A,B \indep E | C, D$ in the second LCN, but not necessarily in the first.
This seems appropriate as the LCNs convey quite distinct scenarios, one related to the
symmetry of logical constraints, the other related to the links induced by directed influences.
\end{example}

We hope to pursue a  comparison between the theoretical and pragmatic aspects of these
semantics   in future work.

\section{Conclusion}\label{section:Conclusion}

In this paper we visited
many Markov conditions that can be applied, if properly adapted, to 
Logical Credal Networks \cite{Qian2022}. We reviewed existing concepts 
and introduced the notion of structure of a LCN, showing that the 
original local condition LMC(LCN) can be viewed as a local condition 
on structures. We then showed that the LMC(LCN) is equivalent to a usual 
local condition when 
the structure is a chain graph, and this leads to a factorization result.
Moreover, we introduced a new semantics 
based on structures and a global Markov condition --- a semantics 
that agrees with the original one when the structure is a chain 
graph but that offers a possible path to factorization properties.

There are many issues left for future work.
LCNs stress the connection between the syntactic form of constraints
and the semantic consequences of independence assumptions, a theme
that surfaces in many probabilistic logics. We must investigate more
carefully the alternatives when extracting independence relations from
constraints, in particular  to
differentiate   ways in which bi-directed edges are 
created.

We must also examine positivity assumptions. What is the best way to
guarantee a factorization? Should we require the user to explicitly express 
positivity 
assumptions? Should we allow for logical constraints that assign
probability zero to some configurations; if so, which kinds of
configurations, and how to make those constraints compatible with
factorization properties? 

It is also important to study a large number of Markov conditions
that can be found in the literature, both the ones connected with chain
graphs and the ones connected with causal and feedback models, that we did not 
deal with in this paper. We must verify which conditions lead
to factorization results, and which conditions are best
suited to capture the content of logical formulas, causal
influences, feedback loops. 

In a more applied perspective, we must investigate whether the ideas
behind LCNs can be used with practical specification
languages such as Probabilistic Answer Set Programming, and we must
test how various semantics for LCNs fare in realistic settings.

% Probabilistic Logic Programming: check how to specify constraints using rules,
% possibly with new syntax. 
%
% Application: discuss the applications in the LCN paper under distinct Markov conditions,
% to understand the various practical consequences of those conditions. 

% Question: can a factorization be guaranteed with a Markov condition
% that focuses on the Markov blanket?

\section*{Acknowledgements}

This work was carried out at the Center for Artificial Intelligence (C4AI - USP/IBM/FAPESP), with support by the S\~ao Paulo Research Foundation (FAPESP grant 2019/07665-4) and by the IBM Corporation. The author was partially supported by CNPq grant 312180/2018-7. We acknowledge support by CAPES - Finance Code 001.

%\clearpage

%\bibliographystyle{plain}
%\bibliography{consulted,lcn}

\end{document}